\let\mathbb\varmathbb
\crefname{lemma}{Lemma}{Lemmas}
\crefname{fact}{Fact}{Facts}
\crefname{theorem}{Theorem}{Theorems}
\crefname{corollary}{Corollary}{Corollaries}
\crefname{claim}{Claim}{Claims}
\crefname{example}{Example}{Examples}
\crefname{algorithm}{Algorithm}{Algorithms}
\crefname{problem}{Problem}{Problems}
\crefname{definition}{Definition}{Definitions}
\crefname{exercise}{Exercise}{Exercises}
\crefname{condition}{Condition}{Conditions}
\newtheorem{theorem}{Theorem}[section]
\newtheorem*{theorem*}{Theorem}
\newtheorem{lemma}[theorem]{Lemma}
\newtheorem*{lemma*}{Lemma}
\newtheorem{fact}[theorem]{Fact}
\newtheorem*{fact*}{Fact}
\newtheorem{proposition}[theorem]{Proposition}
\newtheorem*{proposition*}{Proposition}
\newtheorem*{corollary*}{Corollary}
\newtheorem*{hypothesis*}{Hypothesis}
\newtheorem*{conjecture*}{Conjecture}
\theoremstyle{definition}
\newtheorem{definition}[theorem]{Definition}
\newtheorem*{definition*}{Definition}
\newtheorem*{construction*}{Construction}
\newtheorem*{example*}{Example}
\newtheorem*{question*}{Question}
\newtheorem{algorithm}[theorem]{Algorithm}
\newtheorem*{algorithm*}{Algorithm}
\newtheorem*{assumption*}{Assumption}
\newtheorem*{problem*}{Problem}
\newtheorem*{openquestion*}{Open Question}
\theoremstyle{remark}
\newtheorem*{claim*}{Claim}
\newtheorem*{remark*}{Remark}
\newtheorem*{observation*}{Observation}
\let\originalleft\left
\let\originalright\right
\renewcommand{\left}{\mathopen{}\mathclose\bgroup\originalleft}
\renewcommand{\right}{\aftergroup\egroup\originalright}
\let\latexparagraph\paragraph
\RenewDocumentCommand{\paragraph}{som}{%
  \IfBooleanTF{#1}
    {\latexparagraph*{#3}}
    {\IfNoValueTF{#2}
       {\latexparagraph{\maybe@addperiod{#3}}}
       {\latexparagraph[#2]{\maybe@addperiod{#3}}}%
  }%
}
\newcommand{\maybe@addperiod}[1]{%
  #1\@addpunct{.}%
}
\newcommand{\Authornote}[2]{}
\newcommand{\Authornotecolored}[3]{}
\newcommand{\Authorcomment}[2]{}
\newcommand{\Authorfnote}[2]{}
\newcommand{\Paren}[1]{\left(#1\right)}
\newcommand{\brac}[1]{[#1]}
\newcommand{\Brac}[1]{\left[#1\right]}
\newcommand{\abs}[1]{\lvert#1\rvert}
\newcommand{\Abs}[1]{\left\lvert#1\right\rvert}
\newcommand{\set}[1]{\{#1\}}
\newcommand{\Set}[1]{\left\{#1\right\}}
\newcommand{\norm}[1]{\lVert#1\rVert}
\newcommand{\normo}[1]{\norm{#1}_1}
\newcommand{\iprod}[1]{\langle#1\rangle}
\newcommand{\Esymb}{\mathbb{E}}
\newcommand{\Psymb}{\mathbb{P}}
\DeclareMathOperator*{\E}{\Esymb}
\newcommand{\suchthat}{\;\middle\vert\;}
\newcommand\bdot\bullet
\DeclareMathOperator{\OPT}{OPT}
\DeclareMathOperator{\poly}{poly}
\DeclareMathOperator{\sign}{sign}
\newcommand{\Z}{\mathbb Z}
\newcommand{\N}{\mathbb N}
\newcommand{\R}{\mathbb R}
\renewcommand{\leq}{\leqslant}
\renewcommand{\le}{\leqslant}
\renewcommand{\geq}{\geqslant}
\let\epsilon=\varepsilon
\numberwithin{equation}{section}
\newcommand\MYcurrentlabel{xxx}
\newcommand{\MYstore}[2]{%
  \global\expandafter \def \csname MYMEMORY #1 \endcsname{#2}%
}
\newcommand{\MYload}[1]{%
  \csname MYMEMORY #1 \endcsname%
}
\newcommand{\MYnewlabel}[1]{%
  \renewcommand\MYcurrentlabel{#1}%
  \MYoldlabel{#1}%
}
\newcommand{\MYdummylabel}[1]{}
\newcommand{\torestate}[1]{%
  \let\MYoldlabel\label%
  \let\label\MYnewlabel%
  #1%
  \MYstore{\MYcurrentlabel}{#1}%
  \let\label\MYoldlabel%
}
\newcommand{\restatetheorem}[1]{%
  \let\MYoldlabel\label
  \let\label\MYdummylabel
  \begin{theorem*}[Restatement of \cref{#1}]
    \MYload{#1}
  \end{theorem*}
  \let\label\MYoldlabel
}
\newcommand{\restatelemma}[1]{%
  \let\MYoldlabel\label
  \let\label\MYdummylabel
  \begin{lemma*}[Restatement of \cref{#1}]
    \MYload{#1}
  \end{lemma*}
  \let\label\MYoldlabel
}
\newcommand{\restateprop}[1]{%
  \let\MYoldlabel\label
  \let\label\MYdummylabel
  \begin{proposition*}[Restatement of \cref{#1}]
    \MYload{#1}
  \end{proposition*}
  \let\label\MYoldlabel
}
\newcommand{\restatefact}[1]{%
  \let\MYoldlabel\label
  \let\label\MYdummylabel
  \begin{fact*}[Restatement of \cref{#1}]
    \MYload{#1}
  \end{fact*}
  \let\label\MYoldlabel
}
\newcommand{\restate}[1]{%
  \let\MYoldlabel\label
  \let\label\MYdummylabel
  \MYload{#1}
  \let\label\MYoldlabel
}
\newcommand{\e}{\epsilon}
\newcommand*{\Id}{\mathrm{Id}}
\newcommand{\ind}[1]{\mathbf{1}_{\Brac{#1}}}
\newcommand{\err}[1]{\mathrm{err}(#1)}
\title{
  Optimal SQ Lower Bounds for Learning Halfspaces with Massart Noise\thanks{This project has received funding from the European Research Council (ERC) under the European Union’s Horizon 2020 research and innovation programme (grant agreement No 815464).}
}
\author{
  Rajai Nasser\thanks{ETH Z\"urich.}
  \and
  Stefan Tiegel\footnotemark[2]
}
\begin{document}

\pagestyle{empty}

\maketitle
\thispagestyle{empty} %

\begin{abstract}

We give tight statistical query (SQ) lower bounds for learnining halfspaces in the presence of Massart noise.
In particular, suppose that all labels are corrupted with probability at most $\eta$.
We show that for arbitrary $\eta \in [0,1/2]$ every SQ algorithm achieving misclassification error better than $\eta$ requires queries of superpolynomial accuracy or at least a superpolynomial number of queries.
Further, this continues to hold even if the information-theoretically optimal error $\OPT$ is as small as $\exp\Paren{-\log^c(d)}$, where $d$ is the dimension and $0 < c < 1$ is an arbitrary absolute constant, and an overwhelming fraction of examples are noiseless.
Our lower bound matches known polynomial time algorithms, which are also implementable in the SQ framework.
Previously, such lower bounds only ruled out algorithms achieving error $\OPT + \e$ or error better than $\Omega(\eta)$ or, if $\eta$ is close to $1/2$, error $\eta - o_\eta(1)$, where the term $o_\eta(1)$ is constant in $d$ but going to 0 for $\eta$ approaching $1/2$.

As a consequence, we also show that achieving misclassification error better than $1/2$ in the $(A,\alpha)$-Tsybakov model is SQ-hard for $A$ constant and $\alpha$ bounded away from 1.

\end{abstract}

\clearpage

\microtypesetup{protrusion=false}
\tableofcontents{}
\microtypesetup{protrusion=true}

\clearpage

\pagestyle{plain}
\setcounter{page}{1}

\section{Introduction}
\label{sec:introduction}

Arguably one of the most fundamental problems in the area of machine learning and learning theory, going back to the Perceptron Algorithm~\cite{R58}, is the problem of learning halfspaces, or Linear Threshold Functions (LTFs):
Fix $w \in  \R^M$ and $\theta \in \R$, an LTF is a function $f \colon \R^M \rightarrow \Set{-1,1}$ such that $f(x) = 1$ if $\iprod{w,x} \geq \theta$ and $-1$ otherwise.
The associated learning problem is as follows: We observe samples $(x,y)$ where $x\in\R^M$ is drawn from a fixed but unknown distribution $D_x$ and $y\in\{-1,+1\}$ is, a possibly noisy version of, $f(x)$.
We call $x$ the \emph{example} and $y$ the \emph{label}.
Let $D$ denote the joint distribution of $(x,y)$, the goal is to output a hypothesis $h$ such that the \emph{misclassification error} $$\err{h} \coloneqq \Psymb_{(x,y) \sim D} \brac{h(x) \neq y}$$ is minimized.
For the purpose of this paper we consider the case where $\theta = 0$.
In this work we make progress on a central question in the field: Identifying under which types of noise achieving small misclassification error is possible.
On a conceptual level, we show that already as soon as only very few of the labels are flipped with some probability $\eta$, it is likely to be computationally hard to achieve error better than $\eta$.
Even if the optimal error is much smaller than this.

\paragraph{Realizable Case, Random Classification Noise, and Agnostic Model}
In the noiseless case, also called \emph{realizable case}, it holds that $y = f(x)$ for all $x$.
In this setting it is well-known that linear programming can achieve misclassfication error at most $\e$ efficiently, i.e., in time polynomial in $M$ and $\frac{1}{\e}$, and reliably, i.e., with probability close to 1, corresponding to Valiant's PAC model~\cite{V84}.
When considering noisy labels, the two most well-studied models are \emph{Random Classification Noise} (RCN)~\cite{AL88} and the \emph{agnostic model}~\cite{H92,KSS94}.
In the former each sample $(x,y)$ is generated by first drawing $x \sim D_x$ and then setting $y = f(x)$ with probability $1-\eta$ and setting $y = -f(x)$ with probability $\eta$ for some $\eta \in (0,1)\setminus \set{1/2}$.
It can be shown that in this model the information-theoretic optimal misclassification error is $\eta$ and it is known how to efficiently find an LTF achieving misclassification error arbitrarily close to this~\cite{BFKV98}.
However, one clear drawback is that the assumption that the magnitude of the noise is uniform across all examples is unrealistic.
On the other extreme, in the agnostic model, no assumption whatsoever is placed on the joint distribution $D$.
It is now believed that it is computationally hard to output any hypothesis that achieves error even slightly better than $1/2$.
This holds even when the information-theoretic misclassification error is a function going to zero when the ambient dimension goes to infinity~\cite{D16}.
This is based on a hardness reduction to a problem widely believed to be computationally intractable.

\paragraph{A More Realistic Yet Computationally Tractable Noise Model}
Given the above results a natural question to ask is whether there exists a more realistic noise model in which it is still computationally tractable to achieve non-trivial guarantees.
A promising candidate is the so-called \emph{Massart noise model} which is defined as follows

\begin{definition}
    \label{def:massart_noise}
    Let $D_x$ be a distribution over $\R^M$ and let $f \colon \R^M \rightarrow \set{-1,1}$ be an LTF.
    For $\eta \in [0,1/2]$, we say that a distribution $D$ over $\R^M \times \set{-1,1}$ satisfies the \emph{$\eta$-Massart noise condition} with respect to the \emph{hypothesis} $f$ and to the \emph{marginal distribution} $D_x$ if there exists a function $\eta:\R^M\to[0,\eta]$ such that samples $(x,y) \sim D$ are generated as follows:
    First, $x \sim D_x$ is drawn and then we output $(x,y)$ where $y = f(x)$ with probability $1-\eta(x)$ and $y = -f(x)$ with probability $\eta(x)$, i.e., $\eta(x)$ is the \emph{flipping probability}.
    
    In the problem of \emph{learning halfspaces in the Massart noise model}, we observe samples $(x,y)\sim D$ from an unknown distribution $D$ satisfying the $\eta$-Massart noise condition for some known bound $\eta \in [0,1/2]$, and the goal is to output a hypothesis $h \colon \R^M \rightarrow \set{-1,1}$ minimizing the \emph{misclassification error} $$\err{h} \coloneqq \Psymb_{(x,y) \sim D} \brac{h(x) \neq y}.$$
    Note that the marginal distribution $D_x$, the true hypothesis $f \colon \R^M \rightarrow \set{-1,1}$, and the flipping probability function $\eta:\R^M\to[0,\eta]$ are all unknown.
    
\end{definition}

The model was proposed in~\cite{MN06}.\footnote{Note that~\cite{RS94,S96} introduced an equivalent model called "malicious misclassification noise".}
Note that if $\eta(x) = \eta$ for all $x$, we obtain the Random Classification Noise model.
As previously mentioned, the information-theoretically optimal error in the RCN model is equal to $\eta$. However, in the more general case of $\eta$-Massart noise, the information-theoretically optimal error is equal to $$\OPT \coloneqq \Psymb_{(x,y) \sim D}[f(x) \neq y] = \E \eta(x)\,,$$ which can potentially be much smaller than $\eta$.
Information-theoretically, it was shown in~\cite{MN06} that for $\eta$ bounded away from $1/2$, a number $n = O\Paren{\frac{M \log(1/\e)}{(1-2\eta)^2\e}}$ of samples suffices to achieve misclassification error $\OPT + \e$ and that this is tight up to constants.
More generally, if the target halfspace is replaced by an unknown boolean function in a class of VC-dimension $d$, a number $n = O\Paren{\frac{d \log(1/\e)}{(1-2\eta)^2\e}}$ of samples suffices to achieve error $\OPT + \e$.
\footnote{We remark that previous works on algorithmic aspects of the Massart model stated this sample complexity as $O(d/\e^2)$. While this is correct, from~\cite{MN06} it follows that this is only necessary when $\eta \geq \frac{1}{2} \cdot (1 - \sqrt{d/n})$. For $\eta$ smaller than this the bound of $O\Paren{\frac{d \log(1/\e)}{(1-2\eta)^2\e}}$ holds.}

However, until recently, algorithmic results were only known when assuming that the marginal distribution of the examples $D_x$ belongs to some known class, e.g., is uniform  or log-concave~\cite{ABHU15,ABHZ16,ZLC17} or even more general in~\cite{DKTZ20}.
Under no assumption on the marginal distribution,~\cite{DGT19} was the first work that provided an efficient (improper) learning algorithm outputting a hypothesis $h$ (which is not a halfspace) such that $\err{h} \leq \eta + \e$.
They use time and sample complexities which are polynomial in $M$ and $\frac{1}{\e}$.
Building on this,~\cite{CKMY20} provided an efficient (proper) learning algorithm with the same error guarantees but whose output is itself a halfspace.
We remark that the sample complexity of both of the above works depends on the bit complexity of points in the support of $D_x$ although this is information-theoretically not necessary.
This assumption was recently removed in~\cite{DKT21}.
Further, the above works assume $\eta < 1/2$.
See~\cite{DKKTZ21} for a quasipolynomial algorithmic result without this assumption but under Gaussian marginal.

On the other hand, until very recently, no matching computational lower bounds were known and it remained an open question to determine whether it is possible to efficiently achieve error guarantees that are better than $\eta$, potentially going all the way to $\OPT$.
This question is especially intriguing since the above algorithmic results imply that non-trivial guarantees can be achieved in the Massart noise model, which is much more realistic than RCN.
The question then becomes if there are any computational limits at all in this model.
As we will see, such limits do indeed exist, at least when restricting to the class of Statistical Query algorithms.

\paragraph{Statistical Query Algorithms and Known Lower Bounds.}
Statistical Query (SQ) algorithms do not have access to actual samples from the (unknown) distribution $D$ but rather are allowed to query expectations of bounded functions over the underlying distribution.
These queries return the correct value up to some accuracy.
Since every such query can be simulated by samples from the distribution this is a restriction of Valiant's PAC model.
Note that a simple Chernoff bound shows that in order to simulate a query of accuracy $\tau$, a number of $O(1/\tau^2)$ samples is sufficient. 
Hence, SQ algorithms using $N$ queries of accuracy at most $\tau$ can be taken as a proxy for algorithms using $O(1/\tau^2)$ samples and running in time $\poly(N, 1/\tau)$.
The SQ model was originally introduced by~\cite{K98}. See~\cite{F16} for a survey.
Note, that it has also found applications outside of PAC learning, see e.g., \cite{KLNRS11, FGV21} for examples.

Intriguingly, \cite{K98} shows that any concept class that is PAC learnable in the realizable case using an SQ algorithm can also be learned in the PAC model under Random Classification Noise.
Further, almost all known learning algorithms are either SQ or SQ-implementable, except for those that are based on Gaussian elimination, e.g., learning parities with noise~\cite{K98, BKW03}.
One clear advantage of this framework is that it is possible to prove unconditional lower bounds.
This proceeds via the so-called \emph{SQ dimension} first introduced in~\cite{BFJKMR94} and later refined in~\cite{FGRVX17, F17}.
Although we will not see it explicitly, the lower bounds in this paper are also based on this parameter. See \cite{DK20} and the references therein for more detail.

For learning halfspaces under Massart noise, \cite{CKMY20} initiated the study of computational lower bounds.
The authors proved that when $\OPT$ is within a factor of 2 of $\eta$, achieving error $\OPT + \e$ requires superpolynomially many queries.
While this shows that obtaining optimal error is hard, it does not rule out the possibility of an efficient (SQ) algorithm achieving constant factor approximations.
More recently~\cite{DK20} proved that for $\tau = M^{-\omega(1)}$, achieving error better than $\Omega(\eta)$ requires queries of accuray better than $\tau$ or at least $1/\tau$ queries.
This holds even when $\eta$ is a constant but $\OPT$ goes to zero as the ambient dimension $M$ becomes large. This rules out any constant factor approximation algorithm, and also rules out efficient algorithms achieving error $O(\OPT^c)$ for any $c<1$.
Further, for $\eta$ close to $1/2$ the authors show that achieving error that is better than $\eta - o_\eta(1)$ for some term $o_\eta(1)$ that is constant in $M$, but depends on $\eta$ and goes to 0 as $\eta$ goes to 1/2, also requires super-polynomial time in the SQ framework.
For the special case of $\eta = 1/2$, \cite{DKKTZ21} shows that achieving error $\OPT + \e$ requires queries of accuracy better than $d^{-\Omega(\log (1/\e))}$ or at least $2^{d^{\Omega(1)}}$ queries even under Gaussian marginals.
However, as with~\cite{CKMY20}, this result only applies to exact learning.

As can be seen, the best previously known lower bounds are a constant-factor away from the best known algorithmic guarantees, but they do not match yet.
In the present work, we close this gap by showing that the algorithmic guarantees are actually tight, at least in the SQ framework.
More precisely, we will show that for arbitrary $\eta \in (0,1/2]$ any SQ algorithms that achieves error better than $\eta$ either requires a superpolynomial number of queries, or requires queries of superpolynomial accuracy.
Further, as for~\cite{DK20} the result holds even when $\OPT$ goes to zero as a function of the ambient dimension $M$ and $\eta$ is a constant arbitrarly close to 1/2.

\subsection{Results}

The following theorem is our main result (see~\cref{thm:main-full} for a more detailed version):
\begin{theorem}[Informal version]
    \label{thm:main}
    Let $M \in \R$ be sufficiently large and $\eta \in (0,1/2]$ be arbitrary.
    There exists no SQ algorithm that learns $M$-dimensional halfspaces in the $\eta$-Massart noise model to error better than $\eta$ using at most $\poly(M)$ queries of accuracy no better than $1/\poly(M)$.
    
    This holds even if the optimal halfspace achieves error $\OPT$ that vanishes as fast as $2^{-(\log M)^c}$ for some $c<1$, and even if we assume that all flipping probabilities are either $0$ or $\eta$.
\end{theorem}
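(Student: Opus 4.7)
The plan is to follow the standard non-Gaussian component analysis (NGCA) template for SQ lower bounds, as developed by Diakonikolas--Kane--Stewart and used in prior works on Massart noise (e.g.\ Diakonikolas--Kane for the $\Omega(\eta)$ bound). I would construct a family $\{D_v\}_{v\in\cV}$ of hard distributions, indexed by a large set $\cV$ of nearly-orthogonal unit vectors in $\R^M$, such that (i) each $D_v$ is a genuine $\eta$-Massart instance whose target halfspace has normal $v$ and satisfies $\OPT(D_v)\le 2^{-(\log M)^c}$, and (ii) pairwise chi-squared correlations $\chi^2(D_u,D_v)$ are tiny whenever $|\langle u,v\rangle|$ is small. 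By the generic SQ-dimension machinery of Feldman--Grigorescu--Reyzin--Vempala--Xiao, such a family immediately forces any SQ algorithm that even approximately identifies the hidden direction to use superpolynomially many queries or superpolynomial accuracy.

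The heart of the argument is a one-dimensional construction: a distribution $A$ on $\R\times\{\pm 1\}$ such that (a) $A$ is an $\eta$-Massart instance for some 1D threshold, with flipping probability $\eta(t)\in\{0,\eta\}$; (b) the $x$-marginal of $A$ is very close to a standard Gaussian $\cN(0,1)$ in the sense that both the joint distribution and the signed measure $\eta(\cdot)\cdot dA$ match the first $k=(\log M)^{\Theta(1)}$ Hermite moments of the corresponding Gaussian constructions; and (c) the mass of the noisy region is $2^{-(\log M)^c}$, yielding the desired tiny $\OPT$. The lifted distribution $D_v$ is then defined so that the one-dimensional projection along $v$ has the $(x,y)$-law of $A$, while the $M-1$ directions orthogonal to $v$ are independent $\cN(0,1)$. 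The label rule is the one that turns $A$ into Massart noise for the halfspace $\sign(\langle v,x\rangle - \theta)$; because the noisy region on the $v$-axis has small probability, $\OPT(D_v)=\eta\cdot\Pr_A[\text{noisy region}]$ stays below $2^{-(\log M)^c}$.

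Once $A$ is in hand, the standard NGCA correlation lemma (cf.\ Diakonikolas--Kane--Stewart) bounds $|\chi^2(D_u,D_v)|$ by $O\bigl(|\langle u,v\rangle|^{k+1}\bigr)$ times a seminorm of the moment-matching excess, so taking $\cV$ to be an $(M^{-1/4})$-separated net on the sphere (of cardinality $2^{\Omega(M^{1/2})}$) gives pairwise correlations of order $M^{-(k+1)/4}$, which is superpolynomially small for $k=\omega(1)$. Plugging this into the decision-problem-to-SQ-dimension reduction shows that distinguishing the $D_v$'s from a fixed null (pure product-Gaussian with some simple label law) is SQ-hard with the claimed parameters. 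Finally, to pass from the distinguishing lower bound to a learning lower bound, one observes that any hypothesis $h$ achieving $\err{h}<\eta$ under $D_v$ must correlate nontrivially with $\sign(\langle v,x\rangle)$: on the clean region any disagreement with the target costs $1$ while agreeing costs $0$, and on the noisy region both actions cost at least $\eta$; quantifying this shows $\E_{x}[h(x)\sign(\langle v,x\rangle)]$ is bounded away from $0$, so $h$ reveals $v$ up to sign and yields an SQ distinguisher, contradicting the correlation bound.

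The main obstacle I expect is the one-dimensional construction with all three requirements simultaneously: flipping probability strictly in $\{0,\eta\}$ (rather than some continuous distribution), $k=\omega(1)$ matching moments, \emph{and} a noisy region of super-small mass so that $\OPT$ can be pushed down to $2^{-(\log M)^c}$. Previous lower bounds needed only partial versions of this (e.g.\ allowing any hypothesis with error $\Omega(\eta)$ as in Diakonikolas--Kane, or allowing only the agnostic-like regime where $\OPT\approx\eta$ as in Chen--Koehler--Moitra--Yau), which is why they fell short of the tight threshold. The delicate part is designing $A$ as a mixture of a clean core (on which $\eta(t)=0$) and a thin ``plant'' carrying $\eta(t)=\eta$, whose location and weight are tuned so that the plant contributes a signed measure orthogonal to low-degree Hermite polynomials, while still leaving a 1D threshold as the information-theoretic optimum with tiny error. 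I would address this by choosing the plant to be a signed combination of narrow bumps centered at carefully spaced points on one side of the threshold (symmetrizing against the halfspace so that the label measure, not just the $x$-marginal, is moment-matched), and then verify both the Massart structure and the moment-matching by a direct Gaussian quadrature / Gram--Schmidt computation.
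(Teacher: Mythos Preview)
Your plan has a fundamental gap in the one-dimensional construction: you propose that the target hypothesis be a \emph{single threshold} $\sign(\langle v,x\rangle-\theta)$, but this is incompatible with NGCA-style moment matching. Concretely, if the $x$-marginal is (close to) standard Gaussian and the distribution is $\eta$-Massart for the threshold $\sign(t)$, then
\[
\E\bigl[y\cdot t\bigr]\;=\;\E\bigl[(1-2\eta(t))\,\sign(t)\,t\bigr]\;=\;\E\bigl[(1-2\eta(t))\,|t|\bigr]\;\ge\;(1-2\eta)\sqrt{2/\pi},
\]
which is bounded away from $0$ for any fixed $\eta<1/2$, regardless of how you place the noisy region or tune its mass. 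Equivalently, the conditional laws of $t$ given $y=+1$ and $y=-1$ are each essentially a half-Gaussian and cannot match even the first moment of $N(0,1)$. A single SQ query $\E[y\cdot x]$ then recovers $v$ to inverse-polynomial accuracy, so no superpolynomial lower bound follows. Your proposed ``signed combination of narrow bumps on one side of the threshold'' cannot repair this: the noisy region has mass $2^{-(\log M)^c}$ by design, so it can perturb $\E[y\cdot t]$ by at most $O(2^{-(\log M)^c})$, far too little to cancel the $\Theta(1)$ clean contribution.

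The paper sidesteps this obstruction by \emph{not} using a halfspace as the target in the hidden-direction space. Instead it works in a low-dimensional space $\R^m$ (with $m\approx(\log M)^{1+\alpha}$) and takes the target to be a degree-$O(d)$ \emph{polynomial threshold function} whose positive and negative regions along $v$ are each a union of $\Theta(d)$ interleaved intervals. This interleaving is exactly what allows both conditional laws $A=\Pr[\,\cdot\mid y=+1]$ and $B=\Pr[\,\cdot\mid y=-1]$ to be spread across all of $\R$ and to match $\Theta(m)$ moments of $N(0,1)$. The hard halfspace instance in $\R^M$ is then obtained by pushing $D_v^{A,B,p}$ forward through the Veronese embedding $x\mapsto(x^\alpha)_{|\alpha|\le 8d}$, which turns the PTF target into an honest linear threshold over $\R^M$. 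Without this PTF-plus-Veronese step (or some equivalent mechanism producing a highly oscillatory one-dimensional target), the construction you outline cannot get off the ground.
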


Some remarks are in order:
\begin{itemize}
\item As we mentioned earlier, this lower bound matches the guarantees that are achievable in polynomial time~\cite{DGT19,CKMY20}.
Moreover, since these algorithms can be implemented in the SQ learning model, this completely characterizes the error guarantees that are efficiently achievable in the SQ framework for the class of halfspaces under Massart noise. Further, this also suggests that improving over this guarantee with efficient non-SQ algorithms might be hard.
\item For the special case $\eta = 1/2$, the result implies that handling $1/2$-Massart noise is as hard as the much more general agnostic model -- again for the class of halfspaces and in the SQ model.
Namely, it is hard to achieve error better than a random hypothesis.
Note that even though $\eta = 1/2$ means that there can be examples $x$ with completely random labels, the fact that $\OPT$ can be made go to zero implies that there would be a vanishing fraction of such examples.
We remark that Daniely gave a similar SQ lower bound for the agnostic model~\cite{D16}.
\item The fact that hardness still holds even if for all $x$ we have $\eta(x) \in \{0, \eta\}$ and even if $\OPT$ is very small implies that achieving error better than $\eta$ remains hard even if an overwhelming fraction of the samples have no noise in their labels. In light of the previous point this implies that even if the overwhelming majority of the points have no noise but the labels of just very few are random, outputting a hypothesis which does better than randomly classifying the points is SQ-hard.
\item The case when $\OPT = O\Paren{\log(M)/M}$ is computationally easy.
This follows since with high probability there is a subset of the observed samples in which no labels were flipped and which is sufficiently large to apply algorithms designed for the realizable case.
Hence, for values of $\OPT$ only slightly smaller than allowed by \cref{thm:main} achieving optimal misclassfication error is possible in polynomial time.
\end{itemize}

As a consequence of the above theorem, we immediately obtain strong hardness results for a more challenging noise model, namely the Tsybakov noise model~\cite{MT99,T04} defined as follows:
Let $A > 0$ and $\alpha \in [0,1)$.
Samples are generated as in the Massart model but the flipping probabilites $\eta(x)$ are not uniformly bounded by some constant but rather need to satisfy the following condition: $$\forall 0 < t \leq 1/2 \colon \Psymb [\eta(x) \geq 1/2 - t] \leq A \cdot t^{\alpha/(1-\alpha)} \,.$$
It is known that information-theoretically $O\Paren{\frac{A \cdot M}{\e^{2-\alpha}} \cdot \log(1/(A \e^\alpha))}$ samples suffice to learn halfspaces up to misclassification error $\OPT + \e$ in this model~\cite[Chapter 3]{H14}.
On the other hand, algorithmic results are only known when restricting the marginal distribution to belong to a fixed class of distributions (e.g., log-concave or even more general~\cite{DKKTZ21_tsybakov}).
On the other hand, we claim that our hardness result about Massart noise implies that it is SQ-hard to achieve error even slightly better than $1/2$ in the Tsybakov model.
Indeed, let $\zeta = 2^{-(\log M)^c}$ for some $0 < c < 1$.
Further, let $A$ be a constant, $\alpha \in (0,1)$ be bounded away from 1, and $$\eta = \frac{1}{2} - \Paren{\frac{\zeta}{A}}^{(1-\alpha)/\alpha} = \frac{1}{2} - \exp \Paren{-\Theta((\log M)^c)} \,.$$
Then the $\eta$-Massart condition together with the condition that $\eta(x) \in \Set{0,\eta}$ and $\OPT = \zeta$ implies the $(A,\alpha)$-Tsybakov condition.
To see this note that for $t \geq 1/2 - \eta$ we obtain that $$\Psymb [\eta(x) \geq 1/2 - t] \leq \Psymb [\eta(x) \geq 0] = \zeta = A \cdot \Paren{\frac{1}{2} - \eta}^{\alpha/(1-\alpha)} \leq A \cdot t^{\alpha/(1-\alpha)} \,,$$
and for $t < 1/2 - \eta$ we have $$\Psymb [\eta(x) \geq 1/2 - t] \leq \Psymb [\eta(x) > \eta] = 0 \leq A \cdot t^{\alpha/(1-\alpha)} \,.$$
Hence, by \cref{thm:main}, or \cref{thm:main-full}, achieving error better than $1/2 - \exp \Paren{-\Theta((\log M^c)}$ requires queries of accuray better than the inverse of any polymonial or at least superpolynomially many queries, even though $\OPT = 2^{-(\log M)^c}$.
Similarly, for $\alpha = 1 - 1/\log(M)^{c'}$ where $0 < c' < c$ it is hard to achieve error better than $1/2 - \exp \Paren{-\Theta(((\log M)^{c-c'})}$ in the sense above.
This stands in strong contrast to the fact that information-theoretically $\poly(M, 1/\e)$ samples and time suffice to achieve misclassification optimal error.
That is, even if the fraciton of flipping probabilites decreases very fast as we approach 1/2 learning in the model remains hard.

Lastly, we would like to mention that we closely follow the techniques developed in~\cite{DK20} (and previous works cited therein).
At the heart of their work one needs to design two distributions matching many moments of the standard Gaussian (and satisfying some additional properties).
The main difference in our work lies in how exactly we construct these distributions, which eventually leads to the tight result.
\section{Techniques}
\label{sec:techniques}

In this section, we will outline the techniques used to prove~\cref{thm:main}.
On a high level, we will closely follow the approach of~\cite{DK20}.
First, note that for $M,m,d \in \N$ satisfying $M = \binom{m+d}{m}\le m^d$, any degree-$d$ polynomial over $x \in \R^m$ can be viewed as a linear function\footnote{We use an embedding $\R^m\to\R^M$ whose component functions are the monomials of degree $\leq d$.} over $\R^M$.
Hence, any lower bound against learning polynomial-threshold functions (PTFs) in $\R^m$ would yield a lower bound against learning halfspaces in $\R^M$.
Further, if we choose $m,d$ so that $m \approx \log(M)^{1+\alpha}$ for some constant $\alpha > 0$, then an exponential lower bound against learning PTFs in $\R^m$ would yield a superpolynomial lower bound against learning halfspaces in $\R^M$.

One key step of the SQ hardness result in~\cite{DK20} is to construct two specific distributions over $(x,y)\in \R^m\times\{-1,1\}$ and show that a mixture of these two distributions is SQ-hard to distinguish from a certain null distribution\footnote{The null distribution is the one where the example $x\in \R^m$ is standard Gaussian and the label $y\in \{-1,1\}$ is independent from $x$.}.
The authors then argue that any algorithm that learns $\eta$-Massart PTFs up to error better than $\Omega(\eta)$ can be used to distinguish these distributions from the null distribution.
We follow a similar proof strategy.
The main difference lies in how we construct the two hard distributions (in a simpler way), allowing us to obtain the optimal lower bound $\eta$.
In fact, we will show that two simple modifications of the standard Gaussian distribution will work.

Both distributions constructed in \cite{DK20} as well as the ones that we will construct have the following common structure:
Let $v \in \R^m$ be fixed but unknown, $p \in (0,1)$, and let $A,B$ be two one-dimensional distributions.
Define $D_+$ (respectively, $D_-$) as the distribution over $\R^m$ that is equal to $A$ (respectively, $B$) in the direction of $v$ and equal to a standard Gaussian in the orthogonal complement. Then, define the distribution $D$ over $\R^m \times \set{-1,1}$ as follows: With probability $p$ draw $x \sim D_+$ and return $(x,1)$, and with probability $1-p$ draw $x \sim D_-$ and return $(x,-1)$.
The goal is to output a hypothesis $h$ minimizing the misclassification error $\Psymb_{(x,y) \sim D} [h(x) \neq y]$. It is easy to see that one of the constant functions $1$ or $-1$ achieves error $\min \set{p,1-p}$. The question is whether it is possible to achieve error better than $\min \set{p,1-p}$.

Roughly speaking\footnote{This sweaps under the rock some details, see~\cref{sec:hardness_result} for all details.}, the authors of~\cite{DK20} show the following hardness result:
Suppose the first $k$ moments of $A$ and $B$ match those of $N(0,1)$ upto additive error at most $2^{-k}$ and their $\chi^2$-divergence with respect to $N(0,1)$ is not too large.
Then every SQ algorithm outputting a hypothesis achieving misclassification error slightly smaller than $\min \set{p,1-p}$ must either make queries of accuracy at least $2^{-k/2}$ or must make at least $2^{m-k}$ queries.
Hence, if we can choose $k$ to be a small constant multiple of $m$ we get an exponential lower bound as desired.
The authors then proceed to construct distributions satisfying the moment conditions with $\min \set{p,1-p}=\Omega(\eta)$ and such that $D$ corresponds to an $\eta$-Massart PTF.
In this paper, we construct distributions satisfying the moment conditions with $\min \set{p,1-p}=\eta$.
However, the $\chi^2$-divergence will be too large to apply the hardness result of~\cite{DK20} in a black-box way.
To remedy this, we show that its proof can be adapted to also work in this regime.
Further, by choosing the parameters slightly differently, the reduction still works.
In the following, we briefly describe our construction.
We will give a more detailed comparison with \cite{DK20} in \cref{sec:comparison}.

Let $0 < \eta \leq 1/2$ be the bound of the Massart model and fix $p = 1 - \eta$.
We will show that we can choose $A$ and $B$ satisfying the moment conditions above, in such a way that $D$ corresponds to an $\eta$-Massart PTF.
Note that this will directly imply~\cref{thm:main} via the previously outlined reduction.
We partition $\R$ into three regions $J_1, J_2$ and $\R \setminus (J_1 \cup J_2)$ such that the following conditions hold:
\begin{enumerate}
\item $A(x) = 0$ for $x \in J_2$ \label[condition]{cond_distr_1},
\item $B(x) = 0$ for $x \in J_1$ \label[condition]{cond_distr_2},
\item $A(x) \geq B(x)$ for all $x \in \R \setminus (J_1 \cup J_2)$ \label[condition]{cond_distr_3}.
\end{enumerate}
Suppose that $J_2$ can be written as the union of $d$ intervals and hence there is a degree-$2d$ polynomial $p:\mathbb{R}\to\mathbb{R}$ which is non-negative on $\R \setminus J_2$ and non-positive on $J_2$.
We claim, that $D$ is an $\eta$-Massart PTF for the polynomial $p_v:\mathbb{R}^m\to\mathbb{R}$ defined as $$p_v(x) = p(\iprod{v,x}) \,.$$
Let $D_x(x) \coloneqq \sum_y D(x,y)$ be the marginal distribution of $D$ on $x$.
Then this means that for all $x \in \R^m$, such that $D_x(x) > 0$ it needs to hold that $$\eta(x) \coloneqq \Psymb_{(x,y) \sim D} \Brac{y \neq \sign (p_v(x)) \suchthat x} \leq \eta \,.$$
Indeed, consider $x$ such that $\iprod{x,v} \in J_1$.
Since $p_v(x) \geq 0$ and $B(\iprod{x,v}) = 0$ it follows that $\eta(x) = 0$.
On a high level, this is because none of the samples with label $-1$ lie in this region.
Similarly, the same holds for $x$ such that $\iprod{x,v} \in J_2$.
Now consider $x \in \R^m$ such that $D_x(x) > 0$ and $\iprod{x,v} \in \R \setminus (J_1 \cup J_2)$.
Since $\sign(p_v(x)) = 1$ and $A(x) \geq B(x)$ it follows
\begin{equation}
    \label{eq:massart}
    \begin{split}
        \Psymb_{(x,y) \sim D} \Brac{y \neq \sign(p(x)) \suchthat x} &= \frac{\Psymb_{(x,y) \sim D}[y \neq \sign(p(x)), x]}{D_x(x)} = \frac{(1-p) \cdot B(x)}{p \cdot A(x) + (1-p) \cdot B(x)} \\
        &\leq 1-p = \eta \,.
    \end{split}
\end{equation}
Note that in our cosntruction it will actually hold that $A(x) = B(x)$ for all $x \in \R \setminus (J_1 \cup J_2)$.
Hence, it even holds that $\eta(x) \in \Set{0,\eta}$ for all $x$.

Our work crucially departs from~\cite{DK20} in our choice of $A$ and $B$ to satisfy \crefrange{cond_distr_1}{cond_distr_3} and the moment-matching condition.
In fact, giving a very clean construction will turn out to be essential for achieving the tightest possible lower bound.
On a high level, $A$ will be equal to an appropriate multiple of the standard Gaussian distribution on periodically spaced intervals of small size and 0 otherwise.
$B$ will be the equal to $A$ for $x$ of large magnitude.
For smaller $x$ we will slightly displace the intervals.

Concretely, let $0 < \delta, \e<1$ be such that $\e < \delta/8$ and consider the infinte union of intervals $$J = \bigcup_{n \in \Z} \,[n\delta - \e, n\delta + \e] \,.$$
Denote by $G$ the pdf of a standard Gaussian distribution.
We define (the unnormalized measures) 
\begin{align*}
    A(x) = \begin{cases} \frac{\delta}{2\e} \cdot G(x)\, , &\quad\text{if } x \in J \,, \\ 0\,, &\quad \text{otherwise.} \end{cases} \, && B(x) = \begin{cases} A(x)\,, &\quad \text{if } \abs{x} > d \delta + 5\e\,, \\ A(x+4\e)\,, &\quad \text{otherwise.}  \end{cases}
\end{align*}
Clearly, the total probability mass of the two is the same.
It can be shown that it is $1\pm\exp(-\Omega(1/\delta)^2)$, so for the sake of this exposition assume that it is exactly one and that $A$ and $B$ are in fact probability distributions (see \cref{sec:hard_distributions} for all details).
Further, consider
\begin{align*}
    J_1 = \bigcup_{n = -d}^{d} [n\delta - \e, n\delta + \e]\,, &&  J_2 = \bigcup_{n = -d}^{d} [n\delta - 5\e, n\delta - 3\e]\,.
\end{align*}
It is not hard to verify that $A,B$ together with $J_1, J_2$ satisfy \crefrange{cond_distr_1}{cond_distr_3}.
Hence, our final distribution $D$ will satisfy the Massart condition.
Since $\eta(x) \neq 0$ only if $A(x) = B(x) > 0$ which only is the case when $\abs{x} \gtrsim d\delta$ it follows that $\OPT$ is very small as well.\footnote{Note, that here $J_2$ is the union of $2d+1$ intervals. It is straightforward to adapt the previous discussion to this case.}

The fact that the moments of $A$ match those of a standard Gaussian will follow from the fact that it is obtained by only slightly modifying it.
This part is similar to \cite{DK20}.
Note that $B$ is equal to $A$ for $x$ of magnitude larger than roughly $d\delta$ and for smaller $x$ is obtained by displacing $A$ by $\e$.
Hence, it will follow that its first $k$ moments match those of $A$ (and hence also those of a standard Gaussian) up to error $\e (d\delta)^k$.
In~\cref{sec:hardness_result}, we will show that we can choose the parameters such that for $k$ slightly smaller than $m$ we can make the first $k$ moments of $A$ and $B$ match those of a standard Gaussian up to error at most roughly $\exp(-\Omega(m))$ which will be sufficient.

\subsection{Comparison with \cite{DK20}}
\label{sec:comparison}
The key property that allowed us to achieve the sharp lower bound of $\eta$ was that $A(x) \geq B(x)$ on $\R \setminus (J_1 \cup J_2)$. Indeed, if we only had $A(x) \geq c \cdot B(x)$ for some constant $0 < c < 1$, the resulting distribution $D$ would no longer be $\eta$-Massart (cf.~\cref{eq:massart}), and the only way to still make it so is to increase $p$ which in turn degrades the resulting lower bound. More precisely, if we only have $A(x) \geq c \cdot B(x)$, then the upper bound in~\cref{eq:massart} will now be $\frac{1-p}{c \cdot p + 1-p}$ instead of $1-p$.
Basic manipulations show that this is less than or equal to $\eta$ if and only if $p \geq \frac{1}{1-\eta(1-c)} \cdot (1-\eta) > 1 - \eta$, which means that the lower bound that we get from the distinguishing problem is at best $\min \set{p, 1-p} = \Omega(\eta)$.

While our construction can avoid this issue because we can ensure that $A(x) \geq B(x)$ for $x\notin J_1\cup J_2$ (in fact, we will have $A(x) = B(x)$), it is unclear if the same can be achieved using the construction of~\cite{DK20}, or a slight modification of it. In their work, the supports of $A$ and $B$ also consist of unions of intervals, but they increase in size as we move away from the origin. The intervals of $A$ are disjoint from those of $B$ for $x$ of small magnitude, but they start to overlap when $|x|$ becomes large.
On each interval the distribution is also a constant multiple of $G(x)$, however, their specific choice makes exact computations difficult and the authors only show that $A(x) \geq \Omega(B(x))$ where the constants in the $\Omega$-notation can be smaller than 1.~\footnote{We remark that the authors do not work with distributions directly but with unnormalized measures. Normalizing them does not change the construction but makes the comparison easier.}
We note, however, that the moment bounds the authors use for their distribution are very similar to the one we use for our distribution $A$.

On a more technical level, we cannot directly apply the hardness result \cite[Proposition 3.8]{DK20} the authors used.
Suppose the first $k$ moments of $A$ and $B$ match those of a standard Gaussian up to additve error $\nu$ and the $\chi^2$-divergence of $A$ and $B$ with respect to the standard Gaussian is at most $\alpha/2$.
Further, let 
\begin{align*}
    \tau = \nu^2 + 2^{-k}\alpha \,, && N = 2^{\Omega(m)} \tau/\alpha \,.
\end{align*}
Then this result says that for every SQ algorithm achieving misclassification error better than $\min \Set{p,1-p} - 4\sqrt{\tau}$ must either make queries of accuracy better than $2 \sqrt{\tau}$ or must make at least $N$ queries.
Since in our construction we need to choose $\e$ sufficiently small to match many moments --- which in turn will increase the $\chi^2$-divergence --- we will have $\alpha \gg 2^k$ which is too large for the above.
On the flip side, the proof of \cite[Proposition 3.8]{DK20} can readily be adapted (in fact, this is already implicit in the proof) so that the same conclusion also holds for
\begin{align*}
    \tau = \nu^2 + c^k \alpha \,, && N = 2^{c^2 \cdot \Omega(m)} \tau/\alpha \,,
\end{align*}
for some arbitrarily small $c$ where the constant in $\Omega(m)$ is independent of $c$.
It will turn out that we can choose $c$ sufficiently small and in turn $m$ slightly larger so that the above yields the desired bounds.
See \cref{sec:hardness_result} and \cref{sec:generic_sq_lower_bound} for an in-depth discussion.
\section{Preliminaries}
\label{sec:preliminaries}

For two functions $f,g \colon \R \rightarrow \R$, we will write $f \ll g$ if $\displaystyle\lim_{x \rightarrow \infty} \frac{f(x)}{g(x)} = 0$.
Similarly, we will write $f \gg g$ if $\displaystyle\lim_{x \rightarrow \infty} \frac{f(x)}{g(x)} = \infty$.

All logarithms will be to the base $e$.

We will use $N(0,1)$ to denote the one-dimensional standard Gaussian distribution.
We will denote its pdf by $G$ and with a slight abuse of notation we will also refer to a standard Gaussian random variable by $G$.

For two probability distribution $A$ and $B$ we denote their $\chi^2$-divergence by $$\chi^2 (A, B) = \int_{-\infty}^\infty \frac{A(x)^2}{B(x)} \, dx - 1\,.$$
For an unnormalized positive measure $A$ we denote its total measure by $\normo{A}$.
\section{Hardness Result}
\label{sec:hardness_result}

In this section, we will prove the full version of~\cref{thm:main}.
Concretely, we will show that
\begin{theorem}
    \label{thm:main-full}
    Let $0<\zeta \leq \eta \leq \frac{1}{2}$ and $M \in \N$ be such that $\displaystyle l \coloneqq \frac{\log M}{(\log \log M)^3\log (1/\zeta)}$ is at least a sufficiently large constant.
    There exists a parameter $\tau \coloneqq M^{-\Theta(l)}$ for which there is no SQ algorithm that learns the class of halfspaces on $\R^M$ with $\eta$-Massart noise using at most $1/\tau$ queries of accuracy $\tau$ and which achieves misclassification error that is better than $\eta - \tau$.
    This holds even if the optimal halfspace has misclassification error that is as small as $\zeta$ and all flipping probabilites are either 0 or $\eta$.
\end{theorem}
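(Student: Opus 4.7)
The plan is to follow the template laid out in \cref{sec:techniques}. First, I would use the monomial embedding $\R^m \to \R^M$ with $M = \binom{m+d}{m} \leq m^d$ to reduce learning halfspaces on $\R^M$ to learning degree-$2d$ polynomial threshold functions on $\R^m$. With a suitable choice of $m$ and $d$, an SQ lower bound against PTFs in dimension $m$ transfers directly into an SQ lower bound against halfspaces in dimension $M$ with the parameters claimed in the statement.

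Next, I would plant an unknown direction $v \in \R^m$ and use the one-dimensional distributions $A,B$ given in \cref{sec:techniques}: $A$ is the standard Gaussian density rescaled by $\delta/(2\e)$ on the union $J$ of the $2\e$-width intervals centered at $n\delta$ (and zero elsewhere), while $B$ agrees with $A$ for $\abs{x} > d\delta + 5\e$ and equals $A(x+4\e)$ for smaller $\abs{x}$. Taking $J_1 = \bigcup_{n=-d}^{d}[n\delta-\e, n\delta+\e]$ and $J_2 = \bigcup_{n=-d}^{d}[n\delta-5\e, n\delta-3\e]$, one checks by inspection that the three conditions \crefrange{cond_distr_1}{cond_distr_3} hold, and in fact $A(x) = B(x)$ on $\R \setminus (J_1 \cup J_2)$. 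The computation in \cref{eq:massart}, specialized to $p = 1-\eta$, then shows that the resulting mixture distribution $D$ on $\R^m \times \set{-1,1}$ is an $\eta$-Massart PTF whose target is the degree-$2d$ polynomial vanishing exactly on $J_2$ lifted along $v$, and whose flipping probabilities lie in $\set{0,\eta}$. Ensuring $\OPT \leq \zeta$ reduces to a Gaussian tail estimate that forces $d\delta$ to be of order at least $\sqrt{\log(1/\zeta)}$.

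The hardness reduction then requires controlling two quantities: the additive error $\nu$ in matching the first $k$ Gaussian moments of $A$ and $B$, and the $\chi^2$-divergence of $A,B$ with respect to $N(0,1)$. Since $A$ is a periodic reweighting of $G$, its first $k$ moments agree with those of $N(0,1)$ up to a quantity exponentially small in $1/\delta^2$, and since $B$ differs from $A$ only by a shift of $4\e$ on the interval of radius $d\delta+5\e$, the moments of $B$ match those of $A$ up to error of order $\e \cdot (d\delta)^{k-1}$. Choosing $\e$ small enough makes $\nu$ as tiny as needed. The $\chi^2$-divergence, however, is of order $\delta/(2\e)$ and blows up in the regime we need. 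This is the main obstacle, and is precisely why we cannot apply \cite[Proposition 3.8]{DK20} as a black box. Instead, I would adapt its proof, as anticipated in \cref{sec:comparison} and formalized in \cref{sec:generic_sq_lower_bound}, to show that the same distinguishing conclusion holds with threshold $\tau = \nu^2 + c^k \alpha$ and query count $N = 2^{c^2 \Omega(m)}\tau/\alpha$ for an arbitrarily small absolute constant $c$, independent of the hidden constant in $\Omega(m)$.

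Finally, I would tune the parameters $m, d, k, \delta, \e, c$ as functions of $M$ and $\zeta$: roughly, $d$ of order $\log M / \log\log M$ to set up the embedding so that $m$ is only a polylog factor larger than $\log M/\log\log M$, $d\delta$ of order $\sqrt{\log(1/\zeta)}$ to control $\OPT$, $k$ a small constant multiple of $m$, $\e$ doubly-exponentially small in $k$ so that $\nu^2$ dominates $c^k\alpha$, and $c$ chosen small enough that the surviving exponent $c^2 \Omega(m)$ still exceeds the target $\Theta(l \log M)$. The standard reduction from distinguishing $D$ against the null distribution to learning with error better than $\min\set{p,1-p} = \eta$ (with gap $4\sqrt{\tau}$) then converts the resulting SQ distinguishing bound into the stated bound on learning with error $\eta - \tau$, yielding exactly $\tau = M^{-\Theta(l)}$ and query complexity $1/\tau$ as required by \cref{thm:main-full}.
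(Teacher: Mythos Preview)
Your overall architecture matches the paper's proof: the monomial embedding, the explicit construction of $A,B,J_1,J_2$, the verification of the Massart property with $p=1-\eta$, the adapted hardness lemma with a free parameter $c$, and the distinguishing-to-learning reduction are all exactly what the paper does. The gap is in the parameter tuning, and it is not cosmetic.

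The crucial constraint you are underestimating comes from the moment error of $B$. Because $B$ is a $4\e$-shift of $A$ on the window of radius $d\delta+5\e$, the additive moment error picks up a factor $(d\delta)^k$; since $d\delta\approx\sqrt{\log(1/\zeta)}$ is forced by the requirement $\OPT\le\zeta$, one has $\nu\approx \e\cdot(\log(1/\zeta))^{k/2}$ up to the exponentially-small-in-$1/\delta^2$ term. At the same time $\alpha=\chi^2(A,G)+\chi^2(B,G)$ is of order $(\delta/\e)^2$, not $\delta/\e$. Combining the two, no choice of $\e$ can make both $\nu^2$ and $\alpha c^k$ small unless $(c\cdot\log(1/\zeta))^{k}$ is itself small, which forces $c\lesssim 1/\log(1/\zeta)$. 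So $c$ \emph{cannot} be an absolute constant; in the paper it is taken to be $c=1/(144\log(1/\zeta)^2)$. Your suggestion to take $\e$ ``doubly-exponentially small in $k$'' does not help: shrinking $\e$ decreases $\nu$ but increases $\alpha$ by the same factor squared, so $\alpha c^k$ explodes rather than being dominated by $\nu^2$.

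Once $c$ is tied to $\zeta$, the query-count requirement $2^{c^2\Omega(m)}\cdot \tau/\alpha\ge 1/\tau$ forces $m\gtrsim \log(1/\tau)\cdot\log(1/\zeta)^4$, and correspondingly $k$ is \emph{not} a constant fraction of $m$ but rather $k\approx \log(1/\tau)/\log\log(1/\zeta)$, with $d\approx\sqrt{\log(1/\zeta)\log(1/\tau)\log\log(1/\tau)}$ so that $m^{O(d)}\le M$ and $\delta=4\sqrt{\log(1/\zeta)}/d$ is small enough to kill the $\exp(-\Omega(1/\delta^2))$ term. Your stated choices (``$d$ of order $\log M/\log\log M$'', ``$k$ a constant multiple of $m$'', ``$c$ an absolute constant'') do not satisfy these constraints simultaneously. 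A minor additional point: $J_2$ is a union of $2d+1$ intervals, so the target PTF has degree $\le 8d$, not $2d$; the embedding must use monomials of that degree.
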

Note that $\zeta = 2^{-\log(M)^c}$ with $0 < c < 1$ satisfies the assumption of the theorem and we recover \cref{thm:main}.
As previously mentioned, the setting is the same as in~\cite{DK20} except that we achieve a lower bound of $\eta$.

We will prove~\cref{thm:main-full} by reducing it to the following classification problem, which was introduced in~\cite{DK20}, and then applying a lower bound that was proved in the same reference.

\begin{definition}[Hidden Direction Classification Problem]
    \label{def:binary_classification_problem}
    Let $A,B$ be two probability distributions over $\R$, let $p \in (0,1)$, and $v$ be a unit vector in $\R^m$.
    Let $D_+$ (respectively $D_-$) be the distribution that is equal to $A$ (respectively $B$) in the direction of $v$ and equal to a standard Gaussian in its orthogonal complement.
    Consider the distribution $D_v^{A,B,p}$ on $\R^m \times \Set{-1,1}$ defined as follows: With probability $p$ draw $x \sim D_+$ and output $(x,1)$, and with probability $1-p$ draw $x \sim D_-$ and return $(x,-1)$.
    The \emph{Hidden Direction Classification Problem} is the following: Given sample access to $D_v^{A,B,p}$ for a fixed but unknown $v$, output a hypothesis $h \colon \R^m \rightarrow \Set{-1,1}$ (approximately) minimizing $\Psymb_{(x,y) \sim D_v^{A,B,p}}[h(x) \neq y]$.
\end{definition}

Achieving misclassification error $\min \set{p,1-p}$ can trivially be achieved by one of the constant functions 1 or $-1$.
The following lemma shows that in the SQ framework, one cannot do better if the distributions $A$ and $B$ (approximately) match many moments of the standard Gaussian distribution.
Its proof is analogous to the one of Proposition 3.8 in~\cite{DK20}. We will give a more detailed discussion in \cref{sec:generic_sq_lower_bound}.
\begin{lemma} [Adaptation of Proposition 3.8 in~\cite{DK20}]
    \label{lem:generic_SQ_lower_bound}
    Let $k \in \N$ and $\nu,\rho, c > 0$.
    Let $A,B$ be probability distributions on $\R$ such that their first $k$ moments agree with the first $k$ moments of $N(0,1)$ up to error at most $\nu$ and such that $\chi^2(A, N(0,1))$ and $\chi^2(B, N(0,1))$ are finite.
    Denote $\alpha \coloneq \chi^2(A, N(0,1)) + \chi^2(B, N(0,1))$ and assume that $\nu^2 + \alpha \cdot c^k \leq \rho$.
    Then, any SQ algorithm which, given access to $D_v^{A,B,p}$ for a fixed but unknown $v \in \R^m$, outputs a hypothesis $h \colon \R^m \rightarrow \Set{-1,1}$ such that $$\Psymb_{(x,y) \sim D_v^{A,B,p}}[h(x) \neq y] < \min\set{p,1-p} - 4\sqrt{\rho}\,,$$ must either make queries of accuracy better than $2\sqrt{\rho}$ or make at least $N = 2^{c^2 \cdot \Omega(m)} \cdot (\rho/\alpha)$ queries.
\end{lemma}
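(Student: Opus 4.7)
The plan is to adapt the statistical-query lower-bound pipeline from the proof of Proposition 3.8 in~\cite{DK20}: reduce learning under $D_v^{A,B,p}$ to a hypothesis-testing problem against a carefully chosen null distribution, and then apply the generic SQ-dimension lower bound driven by pairwise $\chi^2$-correlations. The only substantive change compared to~\cite{DK20} is that the near-orthogonality parameter $c>0$ of the packing of the sphere used in the final step is now an arbitrary small constant rather than the fixed value $1/2$. This is what replaces $2^{-k}$ by $c^k$ in the allowed threshold and $2^{\Omega(m)}$ by $2^{c^2 \cdot \Omega(m)}$ in the query count.

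First I would introduce a null distribution $D_0$ on $\R^m \times \set{-1,1}$ equal to the product of $N(0,I_m)$ and an independent label that is $+1$ with probability $p$ and $-1$ otherwise. A direct calculation shows that under $D_0$ every hypothesis $h$ has misclassification error at least $\min\set{p,1-p}$. Consequently, given an SQ learner that outputs a hypothesis $h$ with error below $\min\set{p,1-p} - 4\sqrt{\rho}$ against $D_v^{A,B,p}$, a single extra SQ query on the indicator $\ind{h(x) \neq y}$ with tolerance $2\sqrt\rho$ distinguishes $D_v^{A,B,p}$ from $D_0$. It therefore suffices to prove the stated SQ lower bound for this testing problem.

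The core computation bounds the pairwise correlation $\chi_{D_0}\Paren{D_v^{A,B,p}, D_{v'}^{A,B,p}}$ for unit vectors $v,v'$. Splitting over the two values of the label and using the product structure of $D_v^{A,B,p}$, this equals $p \cdot \E\Brac{\tilde A_v(X)\tilde A_{v'}(X)} + (1-p) \cdot \E\Brac{\tilde B_v(X)\tilde B_{v'}(X)} - 1$ under $X\sim N(0,I_m)$, where $\tilde A_v(x) = A(\iprod{v,x})/G(\iprod{v,x})$ and similarly for $B$. Expanding $A/G$ and $B/G$ in the normalized Hermite basis with coefficients $a_j,b_j$, and using the identity $\E\Brac{H_j(\iprod{v,X})H_\ell(\iprod{v',X})} = \delta_{j\ell}\iprod{v,v'}^j$ valid for unit $v,v'$, this simplifies to $\sum_{j\geq 1}\Paren{p\,a_j^2+(1-p)\,b_j^2}\,u^j$ with $u=\iprod{v,v'}$. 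Splitting the sum at $j=k$, the low-degree part contributes at most $O(\nu^2)$ because the moment-matching hypothesis forces $|a_j|,|b_j| \leq O(\nu)$ on the first $k$ Hermite coefficients, while the high-degree part is bounded whenever $|u|\leq c$ by $c^k \sum_{j>k}\Paren{p\,a_j^2+(1-p)\,b_j^2}\leq c^k\alpha$, using $\sum_{j\geq 1}a_j^2 = \chi^2(A, N(0,1))$ and the analogous identity for $B$. Hence any two $v,v'$ with $|\iprod{v,v'}|\leq c$ have pairwise correlation at most $\nu^2+c^k\alpha\leq \rho$.

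Finally, I would fix a packing $V\subseteq S^{m-1}$ with $|\iprod{v,v'}|\leq c$ for all distinct $v,v'\in V$; a standard volume/sphere-packing argument produces such a $V$ of size $|V|\geq 2^{c^2\cdot\Omega(m)}$. The generic SQ-dimension lower bound (in the form used in~\cite{DK20}, itself rooted in~\cite{FGRVX17}) then implies that distinguishing $D_v^{A,B,p}$ for uniformly random $v\in V$ from $D_0$ requires either tolerance better than $2\sqrt\rho$ or at least $N = 2^{c^2\cdot\Omega(m)}\cdot(\rho/\alpha)$ queries, giving the claimed conclusion. The main technical subtlety is the clean translation from an additive moment-matching error of $\nu$ on the first $k$ raw moments to a bound of the form $|a_j|,|b_j|\leq O(\nu)$ on the first $k$ normalized Hermite coefficients, since this conversion introduces combinatorial factors depending on $j$; as in~\cite{DK20}, these can be absorbed into $\nu$ (equivalently into $\rho$) without weakening the final bound.
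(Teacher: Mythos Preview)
Your proposal is correct and follows essentially the same approach as the paper. The paper's own treatment (\cref{sec:generic_sq_lower_bound}) is even terser: it states that the proof is verbatim that of Proposition~3.8 in~\cite{DK20}, with the sole change that their Lemma~3.5 is applied with an arbitrary $c>0$ rather than $c=1/2$, and it supplies the precise packing fact $|V|\geq 2^{c^2\cdot\Omega(m)}$ via a probabilistic argument (random unit vectors and Gaussian tail bounds) rather than a volume argument---your sketch spells out the same pipeline in more detail.
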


The goal is now to find distributions $A,B$ satisfying the conditions of~\cref{lem:generic_SQ_lower_bound} and such that the distribution $D_v^{A,B,p}$ corresponds to the Massart noise model.
To this end, consider distributions $A,B$ and unions of intervals $J_1,J_2$ given by the following theorem which we will prove in \cref{sec:hard_distributions}.

\begin{proposition}
    \label{prop:hard_distributions}
    Let $0 < \zeta < 1/2$ and let $d\geq 2$ be an integer. Define $\delta = 4\sqrt{\log(1/\zeta)}/d$ and let $\epsilon < \delta/8$.
If $\delta<1$, there exist probability distributions $A, B$ on $\R$ and two unions $J_1, J_2$ of $2d+1$ intervals such that
    \begin{enumerate}
        \item $J_1 \cap J_2 = \emptyset$ and $J_1 \cup J_2 \subseteq [- d\delta-5\e, d\delta+5\e] $ \label{cond_1},
        \item (a) $A = 0$ on $J_2$, $B = 0$ on $J_1$, and (b) for all $x \not\in J_1 \cup J_2$ we have $A(x) = B(x)$ \label{cond_2},
        \item for all $k \in \N$ the first $k$ moments of $A$ and $B$ match those of a standard Gaussian within additive error $O(k!) \cdot \mathrm{exp}(-\Omega(1/\delta^2)) + 4\epsilon \Paren{12 \sqrt{\log(1/\zeta))}}^k$ \label{cond_3},
        \item at most a $\zeta$-fraction of the measure $A$ (respectively $B$) lies outside $J_1$ (respectively $J_2$) \label{cond_4},
        \item $\chi^2\Paren{A, N(0,1)}= O \Paren{\frac{\delta}{\e}}^2$ and $\chi^2\Paren{B,N(0,1)} = O \Paren{\frac{\delta}{\e}}^2$ \label{cond_5}.
    \end{enumerate}
\end{proposition}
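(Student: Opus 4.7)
The plan is to take $A$ and $B$ to be standard Gaussians restricted to two disjoint periodic families of short intervals, picked so that the two families consist of the same Gaussian-weighted pieces far from the origin and are horizontal translates of one another near the origin. Concretely, set $J = \bigcup_{n\in\Z}[n\delta-\epsilon,n\delta+\epsilon]$ and define the unnormalized measure $\tilde A(x)=\frac{\delta}{2\epsilon}G(x)\mathbf{1}_J(x)$. Let $\tilde B(x)=\tilde A(x+4\epsilon)$ on the window $|x|\leq d\delta+5\epsilon$ and $\tilde B(x)=\tilde A(x)$ outside. Take $J_1=\bigcup_{n=-d}^{d}[n\delta-\epsilon,n\delta+\epsilon]$ and $J_2=\bigcup_{n=-d}^{d}[n\delta-5\epsilon,n\delta-3\epsilon]$, each a union of $2d+1$ intervals. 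Using $\epsilon<\delta/8$, direct endpoint comparisons give Condition~(1), and Condition~(2) then follows from the definitions of $\tilde A$ and $\tilde B$ together with the fact that $J$ and $J-4\epsilon$ are disjoint within the window.

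After normalizing to probability measures $A,B$, the total-mass and moment estimates follow from a single Poisson-summation argument. For any polynomial $q$, the discretization $\sum_{n\in\Z}\int_{n\delta-\epsilon}^{n\delta+\epsilon}q(x)G(x)\,dx$ differs from $(2\epsilon/\delta)\int_{\R}q(x)G(x)\,dx$ by $\exp(-\Omega(1/\delta^2))$ times a factor depending on finitely many derivatives of $qG$, since the Fourier transform of $qG$ is Gaussian-times-polynomial and decays like $\exp(-\Omega(\xi^2))$. Applied with $q\equiv 1$, this yields $\|\tilde A\|_1=1+O(\exp(-\Omega(1/\delta^2)))$, and $\|\tilde B\|_1=\|\tilde A\|_1$ by translation invariance; applied with $q(x)=x^k$, it yields the $O(k!)\exp(-\Omega(1/\delta^2))$ part of the moment bound for $A$ after normalization, with $k!$ absorbing the size of the derivatives of $x^k$.

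The moments of $B$ then differ from those of $A$ only through the translation inside the window. Writing $\int x^k(\tilde B-\tilde A)\,dx=\int_{|x|\leq d\delta+5\epsilon}x^k\bigl(\tilde A(x+4\epsilon)-\tilde A(x)\bigr)\,dx$ and substituting $y=x+4\epsilon$ in the first summand, the discrepancy collapses to an integral of $\tilde A$ against $(y-4\epsilon)^k-y^k$ on the common range, plus two short boundary segments of length $4\epsilon$ near the window's endpoints. Since $|y|\leq d\delta+9\epsilon\leq 2d\delta=8\sqrt{\log(1/\zeta)}$ (using $d\geq 2$ and $\epsilon<\delta/8$) and $k\cdot(8\sqrt{\log(1/\zeta)})^k\leq(12\sqrt{\log(1/\zeta)})^k$, the combined error is at most $4\epsilon(12\sqrt{\log(1/\zeta)})^k$ up to absolute constants, giving Condition~(3). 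Condition~(4) follows from a Gaussian tail bound: the $A$-mass outside $J_1$ is at most a constant times $\int_{|x|>d\delta-\epsilon}G(x)\,dx\leq e^{-(d\delta)^2/2}=\zeta^8\ll\zeta$, and the same estimate applies to $B$ outside $J_2$. For Condition~(5), the identity $\chi^2(A,N(0,1))+1=(\delta/(2\epsilon)\|\tilde A\|_1^{-1})^2\int_J G$ combined with $\int_J G\approx 2\epsilon/\delta$ gives $\chi^2(A,N(0,1))=O(\delta/\epsilon)$, which is actually stronger than the claimed $O((\delta/\epsilon)^2)$.

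The main technical obstacle is the Poisson-summation moment estimate: one must simultaneously extract the Gaussian decay $\exp(-\Omega(1/\delta^2))$ and track the factorial blow-up $k!$ coming from differentiating $x^kG(x)$. Everything else is careful bookkeeping, with the choice $\epsilon<\delta/8$ arranged so that the disjointness of $J_1,J_2$ and the support properties of $\tilde A,\tilde B$ hold simultaneously with a usable $\chi^2$ bound.
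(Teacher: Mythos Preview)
Your proposal is correct and follows essentially the same route as the paper: the identical construction of $A,B,J_1,J_2$, the same Fourier/Poisson-summation argument for the moment bound on $A$ (the paper packages this as a separate lemma via Cauchy's integral formula), the same translation bookkeeping for $|\E B^t-\E A^t|$, and the same Gaussian tail bound for Condition~(4). Your observation that $\chi^2(A,N(0,1))=O(\delta/\epsilon)$ rather than $O((\delta/\epsilon)^2)$ is in fact sharper than what the paper proves---the paper simply bounds $\int_J G\le 1$ instead of using $\int_J G\approx 2\epsilon/\delta$---but since only the weaker bound is asserted and used downstream, this is a harmless improvement.
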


Although $D_v^{A,B,p}$ will not correspond to a Massart distribution when considering only halfspaces, it will turn out to work when considering polynomial threshold functions, i.e., $y = \sign(p(x))$ for some polynomial $p$ in $x$.
Further, we will be able to choose the parameters such that \cref{lem:generic_SQ_lower_bound} will correspond to a super-polynomial lower bound in terms of $M$.

Unless explicitly indicated by a subscript, in what follows the $O(\cdot), \Theta(\cdot), \Omega(\cdot)$-notation will only contain universal constants indpendent of the ones we define throughout the section.
Fix a unit vector $v \in  \R^m$ and let $0 < \zeta < \eta$ be such that $$\frac{\log M}{(\log \log M)^3} \geq C_\zeta \log (1/\zeta)$$ for a sufficiently large constant $C_\zeta$.
Further, let $$\tau = M^{-\frac{\log M}{C_\tau(\log \log M)^3 \log (1/\zeta)}}$$
for a sufficiently large constant $C_\tau$, so that $$\log(1/\tau) = \frac{(\log M)^2}{C_\tau(\log \log M)^3 \log (1/\zeta)}\,.$$
We would like to find $m$ and $d$ such that we can represent degree-$8d$ polynomials over $\R^m$ as halfspaces over $\R^M$.
It is sufficient to have $$\binom{8d + m}{8d} \leq m^{8d} \leq M.$$
To this end, for $C_m$ and $C_d$ sufficiently large constants, consider $$m = \left\lceil C_m \log(1/\tau) \log(1/\zeta)^4\right\rceil$$ and $$d = \left\lceil C_d\sqrt{\log (1/\zeta) \log(1/\tau) \log \log(1/\tau)} \right\rceil\,.$$
Notice that since $$\log(1/\tau) \geq \frac{C_\zeta^2 \cdot \Paren{\log(1/\zeta)}^2 \cdot \Paren{\log \log M}^3 }{C_\tau \log (1/\zeta)} \gg \log(1/\zeta)$$
it follows that $$\log m = \log \log(1/\tau) + 4 \log \log(1/\zeta) + \Theta_{C_m}(1) = \Theta_{C_m}(\log \log (1/\tau))\,.$$
Hence,
\begin{align*}
    m^{8d} &= \exp(8d \cdot \log m) = \exp\Paren{\Theta_{C_m,C_d}\Paren{\sqrt{\log(1/\zeta) \log(1/\tau) \big(\log \log (1/\tau)\big)^3}}} \\
    &= \exp\Paren{\frac{1}{\sqrt{C_\tau}}\cdot\log (M) \cdot \Theta_{C_m,C_d}\Paren{\frac{\log \log (1/\tau)}{ \log \log M}}^{3/2}} \leq M\,,
\end{align*}
where the last inequality follows since $\log \log (1/\tau) \leq 2\log \log (M)$ and by choosing $C_\tau$ to be large enough with respect to $C_m$ and $C_d$.
Let 
$$\delta = \frac{4\sqrt{\log(1/\zeta)}}{d} = \Theta\Paren{\frac{1}{C_d\sqrt{\log(1/\tau) \log \log(1/\tau)}}} \,.$$
Further, let
\begin{align*}
    k = \frac{4\log(1/\tau)}{\log \log(1/\zeta)} \,, && \e = \tau \cdot \Paren{12\sqrt{\log(1/\zeta)}}^{-k} \,.
\end{align*} 
and consider the probability distributions $A, B$ defined by \cref{prop:hard_distributions} for our settings of $\delta, \zeta,$ and $\e$.
Also, let $J_1, J_2$ be the corresponding unions of intervals.
Let $$D^{(m)} \coloneqq D_v^{A,B,p}\text{ with }p = 1-\eta\,,$$
so that $\min\set{p,1-p} = \eta$.
As we will shortly see, $D^{(m)}$ is an $\eta$-Massart polynomial-threshold function. In order to obtain an $\eta$-Massart halfspace, we will embed $D^{(m)} $ into the higher dimensional space $\R^M$.

Let $$M'\coloneq\binom{m+8d}{8d}\leq m^{8d}\leq M\,,$$
and define 
\begin{align*}
V_{8d}: \R^m&\rightarrow\;\R^{M'}\\
x\;\; &\mapsto (x^\alpha)_{\abs{\alpha} \;\leq 8d}\,,
\end{align*}
where $\alpha=(\alpha_1,\ldots,\alpha_m)\in\N^m$ is a multi-index and $|\alpha|= \sum_{i\in[m]}\alpha_i$. Furthermore, let
\begin{align*}
E_{M'\to M}: \R^{M'}&\rightarrow\;\R^{M}\\
x\;\; &\mapsto (x,0)\,,
\end{align*}
be the linear embedding of $\R^{M'}$ into $\R^{M}$ that is obtained by appending by zeros. We will embed  $D^{(m)}$ into $\R^{M}$ using the embedding $E:\R^m\to\R^M$ defined as
$$E = E_{M'\to M}\circ V_{8d}.$$

The hard distribution $D$ is as follows: Draw $(x,y) \sim D^{(m)}$ and return $(E(x), y)$.
The next lemma shows that this distribution satisfies the $\eta$-Massart property with respect to the class of halfspaces.
\begin{lemma}
    \label{lem:hard_distribution_is_massart}
    The probability distribution $D$ is an $\eta$-Massart halfspace with $\OPT \leq \zeta$.
\end{lemma}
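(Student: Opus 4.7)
The plan is to first exhibit an explicit polynomial $p\colon\R\to\R$ of degree $\leq 8d$ whose sign pattern matches the three-way partition of $\R$ from \cref{prop:hard_distributions}; then to verify that the PTF $x\mapsto \sign(p(\iprod{v,x}))$ witnesses the $\eta$-Massart condition for $D^{(m)}$; and finally to promote this PTF to a halfspace in $\R^M$ via the embedding $E$. For the polynomial, since $J_2$ is a disjoint union $\bigcup_{i=1}^{2d+1}[a_i,b_i]$ of $2d+1$ intervals, I would take $p(t) := \prod_{i=1}^{2d+1}(t-a_i)(t-b_i)$, which has degree $4d+2\leq 8d$ (using $d\geq 2$); a factor-by-factor sign check shows $p\leq 0$ on $J_2$ and $p>0$ on $\R\setminus J_2$. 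Set $p_v(x) := p(\iprod{v,x})$.

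To check the Massart property for the hypothesis $\sign(p_v(\cdot))$, I would compute the conditional flipping probability $\eta(x) := \Pr_{(x,y)\sim D^{(m)}}[y\neq \sign(p_v(x))\mid x]$ at each $x$ with positive marginal density, splitting on where $t := \iprod{v,x}$ lies. If $t\in J_1$, then condition~2(a) of \cref{prop:hard_distributions} forces $B(t)=0$, so the conditional label is $+1$ almost surely and matches $\sign(p_v(x))=+1$; thus $\eta(x)=0$. Symmetrically, if $t\in J_2$, then $A(t)=0$, the conditional label is $-1$, and $\sign(p_v(x))=-1$, so again $\eta(x)=0$. Otherwise $t\notin J_1\cup J_2$, condition~2(b) gives $A(t)=B(t)$, and the computation already carried out in~\cref{eq:massart} yields $\eta(x)=1-p=\eta$. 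Hence $\eta(x)\in\{0,\eta\}$ throughout, which is exactly the $\eta$-Massart condition.

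To bound $\OPT$, I would observe that $\eta(x)=\eta$ holds precisely on the event $\{t\notin J_1\cup J_2\}$. Using $A=0$ on $J_2$, $B=0$ on $J_1$, and condition~4 of \cref{prop:hard_distributions}, the marginal $pA+(1-p)B$ of $t$ under $D^{(m)}$ assigns
\begin{equation*}
\Pr[t\notin J_1\cup J_2] \;=\; p\cdot A(\R\setminus J_1) + (1-p)\cdot B(\R\setminus J_2) \;\leq\; p\zeta+(1-p)\zeta \;=\; \zeta
\end{equation*}
to this event, so $\OPT = \eta\cdot \Pr[t\notin J_1\cup J_2] \leq \eta\zeta \leq \zeta$.

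Finally, since $p_v$ has degree $\leq 8d$ in the coordinates of $x$, there is $w\in\R^{M'}$ with $p_v(x) = \iprod{w, V_{8d}(x)}$, and padding by zeros gives $\tilde w := E_{M'\to M}(w)\in\R^M$ satisfying $\iprod{\tilde w, E(x)} = p_v(x)$ for every $x\in\R^m$. The halfspace in $\R^M$ defined by $\tilde w$ therefore agrees with $\sign(p_v(\cdot))$ at every embedded point, and $D$, being supported on the embedded points, inherits the $\eta$-Massart condition and the $\OPT\leq\zeta$ bound against this halfspace. The only part that requires genuine work is the three-case analysis of $\eta(x)$; it goes through cleanly because \cref{prop:hard_distributions} was engineered precisely so that $A$ and $B$ have disjoint supports on $J_1\cup J_2$ and identical densities outside it.
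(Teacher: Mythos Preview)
Your proof is correct and follows essentially the same approach as the paper: exhibit a polynomial whose sign is negative on $J_2$ and positive elsewhere, verify the $\eta$-Massart condition via the three-case analysis on $\iprod{v,x}$ (using \cref{cond_2} of \cref{prop:hard_distributions} and the computation in~\cref{eq:massart}), bound $\OPT$ via \cref{cond_4}, and lift to a halfspace through the Veronese embedding $E$. The only cosmetic difference is that you write down the polynomial explicitly as $\prod_i (t-a_i)(t-b_i)$ and obtain the slightly sharper degree $4d+2\leq 8d$, whereas the paper merely asserts existence of a degree-$8d$ polynomial from the interval count $2d+1\leq 4d$.
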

\begin{proof}
    Let $v \in \R^m$ and consider the function $g_v \colon \R^m \rightarrow \set{-1,1}$ such that 
    $$g_v(x) = \begin{cases} -1\,,\quad&\text{if }\iprod{v,x} \in J_2\,,\\+1\,,\quad&\text{otherwise.}\end{cases}$$
    Since $J_2$ is a union of $2d+1 \leq 4d$ intervals, $g_v$ can be written as $\sign(p_v(x))$, where $p_v(x)=p(\iprod{v,x})$ for some degree-$8d$ polynomial $p$.
    Now since $M' = \binom{m+8d}{m}$, there is a linear function $f \colon \R^{M'} \rightarrow \set{-1,1}$ such that for all $x \in \R^m$ it holds that $g_v(x) = \sign(f(V_{8d}(x)))$. This in turn implies that there is a linear function $h \colon \R^{M} \rightarrow \set{-1,1}$ such that for all $x \in \R^m$ we have $g_v(x) = \sign(h(E(x)))$.

    Note that $D(x',y) \neq 0$ only if $x' = E(x)$ for some $x \in \R^m$.
    Furthermore,
\begin{itemize}
\item For $x \in \R^m$ satisfying $\iprod{x,v} \in (J_1 \cup J_2)$, we have $y = \sign(h(E(x)))$ with probability 1.
\item For $x \in \R^m$ satisfying $\iprod{x,v} \not\in (J_1 \cup J_2)$ and $D(E(x),y) \neq 0$, we have $\sign(h(E(x))) = 1$, and
$$y = \begin{cases} 1=\sign(h(E(x))) &\quad\text{with probability }p = 1-\eta,\\-1=-\sign(h(E(x)))&\quad\text{with probability }1-p =\eta.\end{cases}$$
\end{itemize}    
     
Hence, $D$ corresponds to an $\eta$-Massart distribution corresponding to the halfspace $f$.
Furthermore, the flipping probability function $\eta:\R^M\to[0,\eta]$ satisfies
$$\eta(x')=\begin{cases}\eta\,, \quad&\text{if }\;\exists x\in R^m,\, x'=E(x)\text{ and }\iprod{x,v} \notin (J_1 \cup J_2)\,.\\0\,,\quad&\text{otherwise.}\end{cases}$$
Therefore, $\eta(x') \in \Set{0,\eta}$ for all $x' \in \R^M$ and $$\OPT = \E_{(x',y) \sim D} \eta(x') = \eta \cdot \Psymb_{x\sim D^{(m)}}[\iprod{v,x} \not\in (J_1 \cup J_2)] \leq \eta \cdot \zeta \leq \zeta.$$
The second last inequality follows from \cref{cond_2} and \cref{cond_4} of \cref{prop:hard_distributions}. Indeed, we have $$\Psymb_{x\sim D^{(m)}}[\iprod{v,x} \not\in (J_1 \cup J_2)] = p \cdot \Psymb_{\iprod{v,x} \sim A}[\iprod{v,x} \not\in (J_1 \cup J_2)] + (1-p) \cdot \Psymb_{\iprod{v,x} \sim B}[\iprod{v,x} \not\in (J_1 \cup J_2)] \leq \zeta \,.$$
\end{proof}

Second, any hypothesis for predicting $y$ from $x$ can be turned into one predicting $y$ from $E(x)$ and vice-versa.
Hence, it is enough to show that the former is SQ-hard.
Consider the setting of~\cref{lem:generic_SQ_lower_bound} with $A$ and $B$ given by \cref{prop:hard_distributions}.
First, by \cref{cond_5} we know that $\chi^2(A,N(0,1))=O\Paren{\frac{\delta}{\e}}^2$ and $\chi^2(B,N(0,1)) = O\Paren{\frac{\delta}{\e}}^2$.
Hence, $$\alpha = O\Paren{\frac{\delta}{\e}}^2\,.$$

Further, let $$\gamma = O(k!) \cdot \exp \Paren{-\Omega(1/\delta^2)} \,.$$
By~\cref{prop:hard_distributions} we know that the first $k$ moments of $A$ and $B$ match those of a standard Gaussian up to additive error $$\gamma + 4\e \Paren{12 \sqrt{\log(1/\zeta)}}^k = \gamma + 4\tau \,,$$
where the equality follows from the fact that $\e = \tau \cdot \Paren{12\sqrt{\log(1/\zeta)}}^{-k} $.

We claim that by choosing $C_d$ large enough we get $\gamma \ll \tau$.
Indeed, since $\displaystyle k = \frac{4\log(1/\tau)}{\log \log(1/\zeta)} $ and $\displaystyle\delta =  \Theta\Paren{\frac{1}{C_d\sqrt{\log(1/\tau) \log \log(1/\tau)}}}$, we have
\begin{align*}
    \log(O(k!)) &= O(k \log k) = O\Paren{\log(1/\tau) \frac{\log \log(1/\tau)}{\log \log (1/\zeta)}} \leq  O\Paren{\frac{1}{C_d^2\delta^2}}\,.
\end{align*}
Hence, by choosing $C_d$ large enough, we get $$\gamma = \exp \Paren{-\Omega(1/\delta^2)} = \exp \Paren{-\Omega(C_d^2\log(1/\tau) \log \log(1/\tau))} \ll \tau \,.$$
It follows that both $A$ and $B$ match the moments of a standard Gaussian up to additive error at most $\nu=5\tau$. This, in addition to the fact that $\e = \tau \cdot \Paren{12\sqrt{\log(1/\zeta)}}^{-k} $, imply that the parameter $\nu^2+\alpha\cdot c^k$ in \cref{lem:generic_SQ_lower_bound} is equal to 
$$\nu^2+\alpha c^k = 25\tau^2 + O\Paren{\frac{\delta}{\e}}^2 \cdot c^k 
\leq 25\tau^2 + \frac{O(1)}{\e^2} \cdot c^k 
\leq 25\tau^2 + \frac{O(1)}{\tau^2} \cdot \Paren{144c \cdot \log(1/\zeta)}^k.$$

By choosing $\displaystyle c = \frac{1}{144\log(1/\zeta)^2}$ and recalling that $\displaystyle k = \frac{4\log(1/\tau)}{\log \log(1/\zeta)} $,  we get 
$$\Paren{144 c \cdot \log(1/\zeta)}^k = \log (1/\zeta)^{-k} = \exp \Paren{-\log \log (1/\zeta) \cdot \frac{4\log(1/\tau)}{\log \log (1/\zeta)}} = \tau^4 \,,$$
which implies that $$\nu^2+\alpha\cdot c^k \leq O(\tau^2)\leq\tau$$ for sufficiently large $M$ (and hence sufficiently small $\tau$). Therefore, we can choose the parameter $\rho$ in \cref{lem:generic_SQ_lower_bound} to be equal to $\tau$.

Next, we claim that the parameter 
$$N = \frac{2^{c^2 \cdot \Omega(m)}\cdot \rho}{\alpha} \geq \Omega\Paren{\frac{\e}{\delta}}^2 \cdot \exp(c^2 \cdot \Omega(m))\cdot \tau \geq \Omega(\e^2)\cdot \exp\big(c^2 \cdot \Omega(m)-\log(1/\tau)\big)$$
of \cref{lem:generic_SQ_lower_bound} is at least $1/\tau^{\Theta(1)}$.
In fact, recalling that $\displaystyle c = \frac{1}{144\log(1/\zeta)^2}$, $\e = \tau \cdot \Paren{12\sqrt{\log(1/\zeta)}}^{-k} $, $\displaystyle k = \frac{4\log(1/\tau)}{\log \log(1/\zeta)} $ and $m = \left\lceil C_m \log(1/\tau) \log(1/\zeta)^4\right\rceil$, we obtain
\begin{align*}
    \Omega(\e^2) \exp\big(c^2 \cdot \Omega(m)-\log( 1/\tau)\big) &= \exp\Paren{c^2 \cdot \Omega(m) - 3 \log(1/\tau) - 2 k \log\Paren{12 \sqrt{\log(1/\zeta)}} -O(1)} \\
    &\geq \exp \Paren{C_m \cdot \Omega(1) \cdot \log(1/\tau) - 3 \log(1/\tau) - \Theta(k \log \log (1/\zeta))}\\
    & = \exp \Paren{(C_m \cdot \Omega(1) - \Theta(1)) \cdot \log(1/\tau)}\,.
\end{align*}
By choosing the constant $C_m$ in the definition of $m$ large enough, we conclude that $N \geq 1/\tau^{\Theta(1)}$.

Hence, by~\cref{lem:generic_SQ_lower_bound} any SQ algorithm that outputs a hypothesis $h$ such that $$\Psymb_{(x,y)\sim D}[h(x) \neq y] \leq \eta - 4\sqrt{\tau}$$ must either make queries of accuracy better than $2\sqrt{ \rho} = \tau^{\Theta(1)}$ or make at least $N = 1/\tau^{\Theta(1)}$ queries.
Since $$\tau^{\Theta(1)} = M^{-\Theta\Paren{\log M/\left[\log(1/\zeta) \cdot (\log \log M)^3\right]}}\,,$$ \cref{thm:main-full} follows.

\subsection{Hard Distributions}
\label{sec:hard_distributions}

In this section, we will construct the one-dimensional moment-matching distributions.
Concretely, we will show the following proposition:
\begin{proposition}[Restatement of \cref{prop:hard_distributions}]
    Let $0 < \zeta < 1/2$ and let $d\geq 2$ be an integer. Define $\delta = 4\sqrt{\log(1/\zeta)}/d$ and let $\epsilon < \delta/8$.
If $\delta<1$, there exist probability distributions $A, B$ on $\R$ and two unions $J_1, J_2$ of $2d+1$ intervals such that
    \begin{enumerate}
        \item $J_1 \cap J_2 = \emptyset$ and $J_1 \cup J_2 \subseteq [- d\delta-5\e, d\delta+5\e] $ \label{cond_1_dup},
        \item (a) $A = 0$ on $J_2$, $B = 0$ on $J_1$, and (b) for all $x \not\in J_1 \cup J_2$ we have $A(x) = B(x)$ \label{cond_2_dup},
        \item for all $k \in \N$ the first $k$ moments of $A$ and $B$ match those of a standard Gaussian within additive error $O(k!) \cdot \mathrm{exp}(-\Omega(1/\delta^2)) + 4\epsilon \Paren{12 \sqrt{\log(1/\zeta))}}^k$ \label{cond_3_dup},
        \item at most a $\zeta$-fraction of the measure $A$ (respectively $B$) lies outside $J_1$ (respectively $J_2$) \label{cond_4_dup},
        \item $\chi^2\Paren{A, N(0,1)}= O \Paren{\frac{\delta}{\e}}^2$ and $\chi^2\Paren{B,N(0,1)} = O \Paren{\frac{\delta}{\e}}^2$ \label{cond_5_dup}.
    \end{enumerate}
\end{proposition}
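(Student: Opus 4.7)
The plan is to instantiate the construction sketched in \cref{sec:techniques}. Put $J = \bigcup_{n \in \Z}[n\delta - \e, n\delta + \e]$ and define
\begin{align*}
\tilde A(x) = \tfrac{\delta}{2\e}\, G(x)\, \Ind[x \in J], \qquad \tilde B(x) = \begin{cases} \tilde A(x), & \abs{x} > d\delta + 5\e, \\ \tilde A(x + 4\e), & \abs{x} \leq d\delta + 5\e. \end{cases}
\end{align*}
Take $J_1 = \bigcup_{n=-d}^{d}[n\delta - \e, n\delta + \e]$ and $J_2 = J_1 - 4\e$. The hypothesis $\e < \delta/8$ forces the $2d+1$ intervals making up each of $J_1,J_2$ to be pairwise disjoint and forces $J_1\cap J_2 = \emptyset$ with $J_1 \cup J_2 \subseteq [-d\delta - 5\e, d\delta + 5\e]$, which gives \cref{cond_1_dup}. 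The same hypothesis ensures that within the inner region $\abs{x} \leq d\delta + 5\e$, the support of $\tilde A$ is precisely $J_1$ and that of $\tilde B$ is precisely $J_2$, giving \cref{cond_2_dup}(a). A change of variables $y = x + 4\e$ shows $\int_{J_2}\tilde B = \int_{J_1}\tilde A$, so $\normo{\tilde A} = \normo{\tilde B} =: Z$; setting $A := \tilde A/Z$ and $B := \tilde B/Z$ makes $A = B$ off the inner region (since $\tilde A = \tilde B$ there), which is \cref{cond_2_dup}(b).

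The main work is \cref{cond_3_dup}, which I would carry out via Poisson summation. Write $\tilde A(x) = \tfrac{\delta}{2\e}\, h(x)\, G(x)$, where $h(x) = \sum_{n \in \Z}\Ind[\abs{x - n\delta} \leq \e]$ is a $\delta$-periodic indicator with Fourier series of constant term $2\e/\delta$ and coefficients $\abs{c_{k'}} \leq \min\set{2\e/\delta,\, 1/(\pi\abs{k'})}$ at frequency $k'/\delta$. Then
\begin{align*}
\int x^k \tilde A(x)\,dx - \int x^k G(x)\,dx = \frac{\delta}{2\e}\sum_{k' \neq 0} c_{k'} \int x^k G(x)\, e^{2\pi i k' x/\delta}\,dx.
\end{align*}
Completing the square rewrites the inner integral as $e^{-2\pi^2 k'^2/\delta^2}\cdot\E[(Z - 2\pi i k'/\delta)^k]$ with $Z \sim N(0,1)$, and a binomial expansion bounds its modulus by $O(k!)\exp(-\Omega(1/\delta^2))$ for every $\abs{k'}\geq 1$. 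Summing against $\tfrac{\delta}{2\e}\abs{c_{k'}} \leq 1$ produces the first term $O(k!)\exp(-\Omega(1/\delta^2))$ in the claimed error; the same estimate at $k = 0$ also gives $Z = 1 + O(\exp(-\Omega(1/\delta^2)))$. The moment of $B$ minus that of $A$ reduces, after $y = x + 4\e$, to $Z^{-1}\int_{J_1}\bigl((y - 4\e)^k - y^k\bigr)\tilde A(y)\,dy$; on $J_1$, $\abs{y} \leq d\delta + \e \leq 2 d\delta = 8\sqrt{\log(1/\zeta)}$, so the mean value theorem bounds the integrand by $4\e k\,(8\sqrt{\log(1/\zeta)})^{k-1}$. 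The stray factor $k$ is absorbed into the base (one checks $k \leq 12\sqrt{\log(1/\zeta)}\,(3/2)^{k-1}$ for all $k\geq 1$), giving the second term $4\e\,(12\sqrt{\log(1/\zeta)})^k$.

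The remaining two conditions are routine. For \cref{cond_4_dup}, the mass of $\tilde A$ outside $J_1$ lies in $\Set{\abs{x} > d\delta}$ and hence is at most $e^{-(d\delta)^2/2} = \zeta^8$ by the Gaussian tail; dividing by $Z = 1 + o(1)$ preserves the bound by $\zeta$, and the argument for $B$ is identical (using that $\tilde B = \tilde A$ there). For \cref{cond_5_dup}, on $J$ we have $\tilde A(x)^2/G(x) = (\delta/(2\e))^2 G(x)$, and the $k = 0$ Poisson estimate yields $\int_J G = (2\e/\delta)(1 + o(1))$, so $\chi^2(A, N(0,1)) = O(\delta/\e) \leq O((\delta/\e)^2)$. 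For $B$, the inner region contributes the additional factor $G(x + 4\e)^2/G(x) = G(x)\,e^{-8\e x - 16\e^2}$, which is $O(1)$ since $\e\cdot d\delta = O(1)$ under our parameter regime, yielding the same bound.

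The step I expect to require the most care is the Hermite-type estimate for the Fourier integrals $\int x^k G(x) e^{2\pi i k' x/\delta}\,dx$ together with the tail $\sum_{k'\neq 0}\abs{c_{k'}}$. No new ideas are needed, but one must track enough constants to verify that in the regime $k \ll 1/\delta^2$ --- exactly the regime used in the main proof of \cref{thm:main-full} --- the error is genuinely $O(k!)\exp(-\Omega(1/\delta^2))$ rather than something larger that could fail to be absorbed when plugged back into \cref{lem:generic_SQ_lower_bound}.
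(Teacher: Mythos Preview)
Your proposal is correct and follows the same construction and verification strategy as the paper: identical definitions of $A$, $B$, $J_1$, $J_2$, the same Gaussian-tail argument for \cref{cond_4_dup}, and the same direct computation for \cref{cond_5_dup} (your bound $O(\delta/\e)$ is in fact sharper than the paper's $O((\delta/\e)^2)$, but this is immaterial). The only substantive technical difference is in the moment bound \cref{cond_3_dup}: the paper isolates this as a standalone fact (\cref{fact:generic_moment_bound}) and bounds the Fourier tail $\sum_{n\neq 0}|\hat G^{(t)}(2\pi n/\delta)|$ via Cauchy's integral formula on circles of radius $\pi/(2\delta)$, whereas you compute $\int x^k G(x)e^{2\pi i k' x/\delta}\,dx$ directly by completing the square and expanding binomially. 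Both routes exploit the same Fourier decomposition of a Gaussian times a $\delta$-periodic function and yield the same $O(k!)\exp(-\Omega(1/\delta^2))$ bound; yours is more elementary (no contour integration), the paper's is more packaged. For the $B$-versus-$A$ moment gap, the paper uses a binomial expansion of $(x-4\e)^t - x^t$ to reach $4\e(2+2|x|)^t$, while your mean-value-theorem route with the absorption $k\le 12\sqrt{\log(1/\zeta)}(3/2)^{k-1}$ lands on the same final expression.
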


Our construction will be based on the measure $G_{\delta,\epsilon}$ of density
\begin{align*}
G_{\delta,\epsilon}(x) = \sum_{n \in \Z} G(x)\cdot \Paren{\frac{\delta}{2\epsilon}} \cdot \ind{x \in [n\delta - \e, n\delta + \e]}\,,
\end{align*}
where $G$ is the standard Gaussian measure (and by abuse of notation, its density).
 Let $$G_{\delta,\e}^{(n)}(x) = \frac{G_{\delta,\epsilon}(x)}{\normo{G_{\delta,\epsilon}}}$$ be the probability distribution obtained by normalizing the measure $G_{\delta,\epsilon}$.
We define
\begin{align*}
    A(x) = G_{\delta,\e}^{(n)}(x)\,, && B(x) = \begin{cases} A(x)\,, &\quad \text{if } \abs{x} > d \delta + 5\e\,, \\ A(x+4\e)\,, &\quad\text{otherwise.} \end{cases}
\end{align*}
and
\begin{align*}
    J_1 = \bigcup_{-d\leq n\leq d} [n\delta - \e, n\delta + \e] \,,&& J_2 = \bigcup_{-d\leq n\leq d} [n\delta - 5\e, n\delta - 3\e] \,.
\end{align*}
Since $\epsilon<\delta/8$, \cref{cond_1_dup}  and \cref{cond_2_dup} of \cref{prop:hard_distributions} clearly hold.

In order to show \cref{cond_4_dup}, we bound the measure of $G_{\delta, \e}^{(n)}$ outside $J_1\cup J_2$ by $\zeta$.
Indeed, it then follows that 
\begin{align*}
    \int_{x \not\in J_1} A(x) \, dx = \int_{x \not\in J_2} B(x) \, dx =     \int_{x \not\in J_1\cup J_2} G_{\delta, \e}^{(n)}(x) \, dx \leq \zeta\,.
\end{align*}
In order to do so, we will upper bound the measure of $G_{\delta, \e}$ outside $J_1\cup J_2$ and will lower bound the total measure $\normo{G_{\delta,\epsilon}}$. We have:
\begin{align*}
    \int_{x \not\in J_1\cup J_2} G_{\delta, \e}(x) \, dx &\leq 2\int_{d\delta - 5\e}^\infty G_{\delta, \e} (x) \, dx = 2 \Paren{\frac{\delta}{2\e}} \sum_{n > d}\int_{n\delta - \e}^{n \delta +\e} G(x) \, dx \leq 2 \sum_{n > d} \int_{(n-1) \delta + \e}^{n \delta + \e} G(x) \, dx \\
    &\leq 2 \mathbb{P}\big[N(0,1)\geq d\delta \big] \leq   2 \exp\big(-(d\delta)^2/2\big)\leq 2\exp\Paren{-\Paren{4\sqrt{\log(1/\zeta)}}^2/2}\\
    &\leq 2\exp\Paren{-8\log(1/\zeta)} = 2\zeta^{8}\,,
\end{align*}
where we used the fact that $G(x)$ is decreasing for $x \geq 0$ and that $d = 4\sqrt{\log(1/\zeta)}/\delta\geq 2$. As for $\normo{G_{\delta,\epsilon}}$, we have
\begin{equation}
\label{eq:boundGdeTotalMeasure}
\begin{aligned}
\normo{G_{\delta,\epsilon}} &= \int_{\R} G_{\delta, \e}(x) \, dx \geq 2\int_{\delta - \e}^\infty G_{\delta, \e} (x) \, dx = 2 \Paren{\frac{\delta}{2\e}} \sum_{n \geq 1}\int_{n\delta - \e}^{n \delta +\e} G(x) \, dx \\
   &\geq  2 \sum_{n \geq 1}\int_{n\delta - \e}^{(n+1) \delta -\e} G(x) \, dx =  2 \mathbb{P}\big[N(0,1)\geq\delta-\e \big] \geq 2 \mathbb{P}\big[N(0,1)\geq 1 \big]\geq 2\cdot\frac{1}{10}=\frac{1}{5}\,,
\end{aligned}
\end{equation}
where we also used the fact that $G(x)$ is decreasing for $x \geq 0$, and that $\delta<1$. Now since $\zeta<\frac{1}{2}$, we deduce that
\begin{align*}
    \int_{x \not\in J_1\cup J_2} G_{\delta, \e}^{(n)}(x) \, dx &= \frac{1}{\normo{G_{\delta,\epsilon}} }\int_{x \not\in J_1\cup J_2} G_{\delta, \e}(x) \, dx \leq 10\zeta^{8}\leq\zeta\,.
\end{align*}

Next, we will bound the chi-square divergence
\begin{lemma}
    \label{lem:chi_square}
    Let $A,B$ be defined as above, then $\chi^2\Paren{A, N(0,1)}= O \Paren{\frac{\delta}{\e}}^2$ and $\chi^2\Paren{B, N(0,1)} = O \Paren{\frac{\delta}{\e}}^2$.
\end{lemma}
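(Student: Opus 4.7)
The plan is to unfold $\chi^2(\cdot, N(0,1)) = \int \frac{(\cdot)^2}{G}\, dx - 1$ and evaluate the integrals directly using the explicit piecewise formulas for $A$ and $B$, together with two elementary facts: the total-measure identity $\sum_{n \in \Z} \int_{n\delta - \epsilon}^{n\delta + \epsilon} G(x)\,dx = (2\epsilon/\delta) \normo{G_{\delta,\epsilon}}$ (immediate from the definition of $G_{\delta,\epsilon}$) and the lower bound $\normo{G_{\delta,\epsilon}} \geq 1/5$ already established in \cref{eq:boundGdeTotalMeasure}.

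For $A$, on each interval $[n\delta - \epsilon, n\delta + \epsilon]$ the density is $A(x) = (\delta/(2\epsilon))\cdot G(x)/\normo{G_{\delta,\epsilon}}$, so $A(x)^2/G(x) = (\delta/(2\epsilon))^2 \cdot G(x)/\normo{G_{\delta,\epsilon}}^2$. Summing over $n$ and applying the total-measure identity, $\int A^2/G\, dx = \frac{\delta/(2\epsilon)}{\normo{G_{\delta,\epsilon}}} \leq \frac{5\delta}{2\epsilon} = O(\delta/\epsilon)$. Since $\epsilon < \delta/8$, this is in particular $O((\delta/\epsilon)^2)$, giving the bound on $\chi^2(A, N(0,1))$.

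For $B$, I would split $\int B^2/G = \int_{|x| > d\delta + 5\epsilon} A^2/G + \int_{|x| \leq d\delta + 5\epsilon} A(x+4\epsilon)^2/G(x)\, dx$. The first summand is bounded above by $\int A^2/G = O(\delta/\epsilon)$ from the previous step. For the second, $A(x+4\epsilon)$ is supported on $\bigcup_{n=-d}^{d} [n\delta - 5\epsilon, n\delta - 3\epsilon]$, and on each such interval equals $(\delta/(2\epsilon))\cdot G(x+4\epsilon)/\normo{G_{\delta,\epsilon}}$. The key computational step, which I expect to be the only nontrivial one, is the Gaussian identity
\begin{equation*}
\frac{G(x+4\epsilon)^2}{G(x)} \;=\; e^{16\epsilon^2}\, G(x + 8\epsilon),
\end{equation*}
obtained by completing the square in the exponent. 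Substituting $u = x + 8\epsilon$ then turns each integral $\int_{n\delta - 5\epsilon}^{n\delta - 3\epsilon} G(x+4\epsilon)^2/G(x)\,dx$ into $e^{16\epsilon^2} \int_{n\delta + 3\epsilon}^{n\delta + 5\epsilon} G(u)\,du$. Summing over $-d \leq n \leq d$ gives at most $\sum_{n \in \Z} \int_{n\delta + 3\epsilon}^{n\delta + 5\epsilon} G(u)\,du \leq 1$, because the intervals $[n\delta + 3\epsilon, n\delta + 5\epsilon]$ are disjoint (using $\epsilon < \delta/8$) subsets of $\R$ on which $G$ is a probability density.

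Combining, $\int_{|x| \leq d\delta + 5\epsilon} A(x+4\epsilon)^2/G(x)\,dx \leq (\delta/(2\epsilon))^2 \cdot e^{16\epsilon^2}/\normo{G_{\delta,\epsilon}}^2 = O((\delta/\epsilon)^2)$, so altogether $\chi^2(B, N(0,1)) = O((\delta/\epsilon)^2)$. The only potentially delicate point is making sure the $4\epsilon$-shift does not blow up pointwise ratios $G(x+4\epsilon)/G(x)$ on the domain in question; the identity above sidesteps this entirely by producing a Gaussian density at the shifted point $x + 8\epsilon$, which integrates to a universal constant regardless of how large $d\delta$ is, which is why the bound depends only on $\delta/\epsilon$.
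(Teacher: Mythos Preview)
Your proof is correct and follows essentially the same approach as the paper: both compute $\int A^2/G$ directly from the piecewise formula, and both handle $B$ via the Gaussian identity $G(x+4\e)^2/G(x)=e^{16\e^2}G(x+8\e)$ together with the split into the outer region (where $B=A$) and the shifted inner region. The only difference is that you exploit the exact identity $\sum_n\int_{n\delta-\e}^{n\delta+\e}G=(2\e/\delta)\normo{G_{\delta,\e}}$ to get the sharper bound $\int A^2/G=O(\delta/\e)$ before weakening it, whereas the paper simply bounds $\sum_n\int_{n\delta-\e}^{n\delta+\e}G\le 1$ to land directly at $O((\delta/\e)^2)$.
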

\begin{proof}
    We will start with $A$:
    \begin{align*}
        \chi^2\Paren{A, N(0,1)} &= \frac{1}{\normo{G_{\delta,\e}}^2} \Paren{\frac{\delta}{2\e}}^2 \sum_{n \in \Z} \int_{n\delta-\e}^{n \delta + \e} \frac{G(x)^2}{G(x)} \, dx - 1\\
        &\leq 25 \cdot \Paren{\frac{\delta}{2\e}}^2 \int_{-\infty}^\infty G(x) \, dx = O \Paren{\frac{\delta}{\e}}^2\,,
    \end{align*}
    where  we used $\normo{G_{\delta,\e}} \geq\frac{1}{5}$ from \eqref{eq:boundGdeTotalMeasure}.

    For $B$ we get:
    \begin{align*}
        \chi^2\Paren{B, N(0,1)} &= \frac{1}{\normo{G_{\delta,\e}}^2} \Paren{\frac{\delta}{2\e}}^2 \Brac{ \sum_{\abs{n} > d} \int_{n\delta-\e}^{n \delta + \e} \frac{G(x)^2}{G(x)} \, dx + \sum_{\abs{n} \leq d} \int_{n\delta-5\e}^{n \delta -3 \e} \frac{G(x+4\e)^2}{G(x)} \, dx }-1 \,.
    \end{align*}
    The term corresponding to the first sum is less than or equal to $\chi^2\Paren{A, N(0,1)} = O \Paren{\frac{\delta}{\e}}^2$, and for the second sum we notice that
    \begin{align*}
        \frac{G(x+4\e)^2}{G(x)} = \frac{1}{\sqrt{2\pi}}\exp\Paren{-(x+4\e)^2 + \frac{x^2}{2}} &= \frac{1}{\sqrt{2\pi}}\exp\Paren{-\frac{x^2}{2} - 8x\e - 16\e^2} \\
        &= \frac{1}{\sqrt{2\pi}}\exp\Paren{16\e^2} \exp\Paren{-\frac{(x+8\e)^2}{2}}
    \end{align*}
    implying that
    \begin{align*}
        \sum_{\abs{n} \leq d} \int_{n\delta-5\e}^{n \delta -3\e} \frac{G(x+4\e)^2}{G(x)} \, dx \leq \int_{-\infty}^\infty \frac{\exp\Paren{16\e^2}}{\sqrt{2\pi}} \exp\Paren{-\frac{(x+8\e)^2}{2}} \, dx = O(1)\,.
    \end{align*}
    Putting everything together yields the claim.
\end{proof}

Lastly, we show that the moments match up to the desired error. We start with $A$.
\begin{lemma}
    \label{lem:moments_A}
Let $k \in \N$.
For the distribution $A$ defined as above and all $t \leq k$ it holds that $$\Abs{\E A^t - \E G^t} \leq O(t!) \cdot \exp(-\Omega(1/\delta^2)) \,.$$
\end{lemma}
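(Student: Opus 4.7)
The plan is to show that the moments of $A$ agree with those of the standard Gaussian by first bounding the moments of the unnormalized measure $G_{\delta,\e}$ via a Fourier-series argument, then dividing out by the total mass. Set $M_t \coloneqq \int x^t G_{\delta,\e}(x)\,dx$, $\mu_t \coloneqq \E G^t$, and $Z \coloneqq \normo{G_{\delta,\e}}$, so that $\E A^t = M_t/Z$. Since $Z \geq \tfrac{1}{5}$ by \eqref{eq:boundGdeTotalMeasure} and $|\mu_t| \leq \sqrt{t!}$, the triangle inequality
$$|\E A^t - \mu_t| \;=\; \frac{|M_t - Z\mu_t|}{Z} \;\leq\; 5\bigl(|M_t - \mu_t| + \sqrt{t!}\cdot|Z-1|\bigr)$$
reduces the statement to proving $|M_t - \mu_t| \leq O(t!)\cdot\exp(-\Omega(1/\delta^2))$ for every $t\ge 0$, as the case $t=0$ also supplies the required bound on $|Z-1|$.

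For the main estimate, write $G_{\delta,\e}(x) = G(x)\phi(x)$, where $\phi$ is the $\delta$-periodic function equal to $\delta/(2\e)$ on each $[n\delta-\e, n\delta+\e]$ and $0$ elsewhere. Since $\phi$ has average $1$ over each period, it admits a Fourier expansion $\phi(x) = 1 + \sum_{k\neq 0} c_k e^{2\pi i k x/\delta}$ with $|c_k|\leq 1$, so that
$$M_t - \mu_t \;=\; \sum_{k \neq 0} c_k \cdot \widehat{x^t G}(-k/\delta).$$
Shifting the contour in the Gaussian integral gives the classical identity $\widehat{x^t G}(\xi) = e^{-2\pi^2\xi^2}\,\E[(Y - 2\pi i\xi)^t]$ with $Y\sim N(0,1)$, and a crude triangle inequality then yields $|\widehat{x^t G}(\xi)| \leq 2^t\bigl(\sqrt{t!} + (2\pi|\xi|)^t\bigr)\cdot e^{-2\pi^2\xi^2}$.

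Summing over $k\neq 0$, the super-exponential Gaussian weights $e^{-2\pi^2 k^2/\delta^2}$ make the $k=\pm 1$ terms dominate, so
$$|M_t - \mu_t| \;\leq\; O\bigl(e^{-2\pi^2/\delta^2}\bigr)\cdot\bigl(2^t\sqrt{t!} + (4\pi/\delta)^t\bigr).$$
The factor $2^t\sqrt{t!}$ is immediately $O(t!)$. For the more delicate $(4\pi/\delta)^t$, I would split into two cases: if $t \geq 4\pi e/\delta$, then $t! \geq (t/e)^t \geq (4\pi/\delta)^t$; if $t < 4\pi e/\delta$, then $t\log(4\pi/\delta) = O(\delta^{-1}\log(1/\delta)) = o(1/\delta^{2})$, so $(4\pi/\delta)^t\cdot e^{-2\pi^2/\delta^2} \leq e^{-\Omega(1/\delta^2)}$. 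In either case $(4\pi/\delta)^t \cdot e^{-2\pi^2/\delta^2} \leq O(t!)\cdot e^{-\Omega(1/\delta^2)}$, concluding the proof.

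The main obstacle is this case analysis on the size of $t$ relative to $1/\delta$: one must verify that the polynomial growth $(4\pi/\delta)^t$ coming from the Fourier transform of $x^t G$ is absorbed into $t!$ without eating into the $1/\delta^2$ rate in the exponential. Everything else is a routine application of Fourier analysis together with standard Gaussian moment bounds.
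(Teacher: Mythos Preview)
Your reduction to bounding $|M_t-\mu_t|$ and the Fourier-series computation $M_t-\mu_t=\sum_{k\neq 0}c_k\,\widehat{x^tG}(-k/\delta)$ with $|c_k|\le 1$ is exactly the route the paper takes (see \cref{fact:generic_moment_bound}). The identity $\widehat{x^tG}(\xi)=e^{-2\pi^2\xi^2}\,\E[(Y-2\pi i\xi)^t]$ is also correct. The difference is in how you bound these Fourier values: the paper applies Cauchy's integral formula on a circle of radius $\pi/(2\delta)$ about $2\pi n/\delta$, which produces the factor $t!$ directly (times $\delta^t\le 1$) and makes the sum over $n$ trivial. Your route via the explicit formula is in principle fine, but the argument as written has a gap.

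The problematic step is the assertion that ``the $k=\pm 1$ terms dominate'' the sum $\sum_{k\ge 1}(2\pi k/\delta)^t e^{-2\pi^2k^2/\delta^2}$. This is only true when $t\lesssim 1/\delta^2$: the function $u\mapsto u^t e^{-u^2/2}$ peaks at $u=\sqrt{t}$, so once $\sqrt{t}>2\pi/\delta$ the maximum over integer $k\ge 1$ is no longer at $k=1$. Concretely, for $t$ a large enough multiple of $1/\delta^2$ the $k=2$ term already exceeds the $k=1$ term by a factor $e^{\Omega(1/\delta^2)}$, so your intermediate bound $|M_t-\mu_t|\le O(e^{-2\pi^2/\delta^2})(2^t\sqrt{t!}+(4\pi/\delta)^t)$ fails in that regime. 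Your subsequent case analysis at the threshold $t=4\pi e/\delta$ does not repair this, because it is applied to an inequality that is no longer valid.

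The fix is short: split the Gaussian weight as $e^{-2\pi^2k^2/\delta^2}=e^{-\pi^2k^2/\delta^2}\cdot e^{-\pi^2k^2/\delta^2}$, use the uniform bound $u^t e^{-u^2/4}\le (2t/e)^{t/2}\le O(2^{t/2}\sqrt{t!})$ on the first factor (with $u=2\pi k/\delta$), and then sum the remaining $e^{-\pi^2k^2/\delta^2}$ geometrically. This yields $\sum_{k\ge1}(2\pi k/\delta)^t e^{-2\pi^2k^2/\delta^2}\le O(2^{t/2}\sqrt{t!})\cdot e^{-\Omega(1/\delta^2)}$ uniformly in $t$, and since $2^{3t/2}\sqrt{t!}\le O(t!)$ you are done without any case split. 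Alternatively, adopting the paper's Cauchy-formula estimate sidesteps the issue entirely.
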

\begin{proof}
Since $G_{\delta,\e} = \sum_{n \in \Z} G(x) \cdot f(\frac{x - n\delta}{\delta})$ for $f(y) = \Paren{\frac{\delta}{2\e}} \cdot \ind{y \in [-\e/\delta, \e/\delta]}$ it follows from \cref{fact:generic_moment_bound} that 
$$\abs{\E G^t - \E G_{\delta,\epsilon}^t} \leq t! \cdot \delta^t \cdot \exp(-\Omega(1/\delta^2)) \,.$$

In particular, for $t=0$, we get
$$\abs{1 - \normo{G_{\delta,\e}}}\leq \exp(-\Omega(1/\delta^2))\,,$$
which implies that
$$\Abs{\frac{1}{\normo{G_{\delta,\e}}}-1}\leq \frac{\exp(-\Omega(1/\delta^2))}{1-\exp(-\Omega(1/\delta^2))}\leq O(1)\cdot\exp(-\Omega(1/\delta^2))\,.$$

Now using the fact that $\E G^t=0\leq t!$ for odd $t$ and that 
$$\E G^t =(t-1)!!=(t-1)(t-3)(t-5)\cdots 1\leq t!$$
for even $t$, we get
\begin{align*}
\Abs{\E G_{\delta,\epsilon}^t} \leq \E G^t +  t! \cdot \delta^t \cdot \exp(-\Omega(1/\delta^2))\leq t!\Paren{1+  \delta^t \cdot \exp(-\Omega(1/\delta^2))}\leq O(t!) \,.
\end{align*}

It follows that 
\begin{align*}
\Abs{\E A^t - \E G_{\delta, \e}^t} &=\Abs{\E \Paren{G_{\delta, \e}^{(n)}}^t - \E G_{\delta, \e}^t}=\Abs{\E G_{\delta, \e}^t}\cdot\Abs{\frac{1}{\normo{G_{\delta,\e}}}-1}\leq O(t!)\cdot\exp(-\Omega(1/\delta^2)) \,.
\end{align*}
We conclude that
\begin{align*}
\Abs{\E A^t - \E G^t} &\leq \abs{\E A^t - \E G_{\delta,\epsilon}^t} + \abs{\E G^t - \E G_{\delta,\epsilon}^t}\\
&\leq O(t!)\cdot\exp(-\Omega(1/\delta^2)) + t! \cdot \delta^t \cdot \exp(-\Omega(1/\delta^2))\\
&\leq O(t!)\cdot\exp(-\Omega(1/\delta^2))\,.
\end{align*}
\end{proof}

Next, we will prove the bound for $B$:
\begin{lemma}
    \label{lem:moments_B}
Let $k \in \N$.
For the distribution $B$ defined as above and all $t \leq k$ it holds that $$\abs{\E B^t - \E G^t} \leq O(t!) \cdot \mathrm{exp}(-\Omega(1/\delta^2)) + 4\epsilon\Paren{12 \sqrt{\log(1/\zeta)}}^t .$$
\end{lemma}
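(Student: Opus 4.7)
The plan is to reduce to \cref{lem:moments_A} via the triangle inequality
$$\Abs{\E B^t - \E G^t} \leq \Abs{\E B^t - \E A^t} + \Abs{\E A^t - \E G^t},$$
where the second term is already controlled by \cref{lem:moments_A}. Hence it suffices to prove $\Abs{\E B^t - \E A^t} \leq 4\e \Paren{12\sqrt{\log(1/\zeta)}}^t$, and then the two error terms combine (the first paying the $O(t!)\exp(-\Omega(1/\delta^2))$ error, the second the $\e$-dependent one).

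For this bound, I would exploit the explicit relationship between $B$ and $A$. Since $B$ coincides with $A$ outside $I \coloneqq [-d\delta - 5\e, d\delta + 5\e]$,
$$\E B^t - \E A^t = \int_{I} x^t \Brac{A(x+4\e) - A(x)} \, dx.$$
The condition $\e < \delta/8$ ensures (up to a negligible boundary effect near $n = \pm(d+1)$, handled analogously) that the support of $A$ restricted to $I$ equals $\bigcup_{|n|\leq d}[n\delta-\e, n\delta+\e]$ and that the support of $x \mapsto A(x+4\e)$ restricted to $I$ equals $\bigcup_{|n|\leq d}[n\delta-5\e, n\delta-3\e]$. Substituting $y = x+4\e$ in the term involving $A(x+4\e)$ then gives the clean expression
$$\E B^t - \E A^t = \sum_{|n|\leq d} \int_{n\delta-\e}^{n\delta+\e} \Brac{(y-4\e)^t - y^t}\,A(y)\, dy.$$

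To finish, I would bound the integrand by the mean value theorem: $\Abs{(y-4\e)^t - y^t} \leq 4\e \cdot t \cdot (|y|+4\e)^{t-1}$. On the integration domain $|y| \leq d\delta + \e$, so using $d\delta = 4\sqrt{\log(1/\zeta)}$ together with $\e < \delta/8$ we get $|y|+4\e \leq d\delta + 5\e \leq 8\sqrt{\log(1/\zeta)}$. Combined with $\sum_{|n|\leq d}\int_{n\delta-\e}^{n\delta+\e} A(y)\, dy \leq 1$ (as $A$ is a probability distribution), this yields
$$\Abs{\E B^t - \E A^t} \leq 4\e \cdot t \cdot \Paren{8\sqrt{\log(1/\zeta)}}^{t-1}.$$
A short elementary calculation upgrades this to $4\e \Paren{12\sqrt{\log(1/\zeta)}}^t$ for all $t \geq 0$ and $\zeta < 1/2$: one checks that $t \cdot (2/3)^{t-1}$ is maximized at $t\in\set{2,3}$ with value $4/3$, which is easily dominated by $12\sqrt{\log(1/\zeta)} \geq 12\sqrt{\log 2}$. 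The only real pitfall in the argument is the careful bookkeeping of the supports when deriving the clean formula for $\E B^t - \E A^t$; all remaining estimates are routine.
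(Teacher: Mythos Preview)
Your proposal is correct and follows essentially the same route as the paper: reduce to bounding $\Abs{\E B^t - \E A^t}$ via \cref{lem:moments_A}, derive the identity $\E B^t - \E A^t = \int_{[-d\delta-\e,\,d\delta+\e]} \bigl((y-4\e)^t - y^t\bigr)\,dA(y)$ by exploiting that $B$ is a $4\e$-shift of $A$ on $I$, and then bound the integrand pointwise. The only cosmetic difference is that the paper expands $(x-4\e)^t - x^t$ via the binomial theorem to obtain $\Abs{(x-4\e)^t - x^t} \leq 4\e\,(2+2|x|)^t$, whereas you use the mean value theorem to get $4\e\, t\,(|y|+4\e)^{t-1}$; both lead to the stated bound after plugging in $d\delta = 4\sqrt{\log(1/\zeta)}$.
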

\begin{proof}
Due to  \cref{lem:moments_A}, it is sufficient to show that $\abs{\E B^t - E A^t} \leq 4\epsilon\Paren{12 \sqrt{\log(1/\zeta)}}^t $.
    To this end, notice that the two distributions agree for $x$ larger in magnitude than $d\delta - 5\e$.
    Thus
    
    \begin{align*}
        \E B^t - \E A^t &= \int_{-d\delta-5\e}^{d\delta+5\e} x^t dB(x) - \int_{-d\delta+5\e}^{d\delta-5\e} x^t dA(x)  = \int_{-d\delta-5\e}^{d\delta-3\e} (x-4\e)^t dA(x) - \int_{-d\delta+\e}^{d\delta-\e} x^t dA(x) \\         
&= \int_{-d\delta-\e}^{d\delta+\e} (x-4\e)^t dA(x) - \int_{-d\delta+\e}^{d\delta-\e} x^t dA(x)  = \int_{-d\delta-\e}^{d\delta+\e} \big((x-4\e)^t-x^t\big)\cdot dA(x)\,.
    \end{align*}
   Therefore,
   \begin{align*}
   \Abs{ \E B^t - \E A^t}&\leq \sup_{-d\delta-\e\leq x\leq d\delta+\e} \Abs{(x-4\e)^t-x^t}\leq \sup_{-2d\delta\leq x\leq 2d\delta}\Abs{(x-4\e)^t-x^t}\,.
   \end{align*}
   
   Now since
   \begin{align*}
   (x-4\e)^t-x^t &= \sum_{l=1}^t\binom{t}{l}(-4\e)^l x^{t-l}\,,
   \end{align*}
    we get
       \begin{align*}
\Abs{(x-4\e)^t-x^t} &\leq \sum_{l=1}^t\binom{t}{l}(4\e)^l \Abs{x}^{t-l}\leq \sum_{l=1}^t\binom{t}{l}4\e \Paren{1+\Abs{x}}^t\leq 2^t\cdot 4\epsilon \Paren{1+\Abs{x}}^t = 4\epsilon\Paren{2+2\Abs{x}}^t\,.
   \end{align*}
   Now since $d\delta=4\sqrt{\log(1/\zeta)}$, we conclude that
   \begin{align*}
   \Abs{ \E B^t - \E A^t}&\leq 4\epsilon(2+4d\delta)^t=4\epsilon\Paren{2+8 \sqrt{\log(1/\zeta)}}^t  \,.
   \end{align*}
    
\end{proof}

Since $\zeta < 1/2$ and $\log(2) > 1/4$ it holds that $$\sqrt{\log(1/\zeta)} > \sqrt{\log(2)} >  \sqrt{1/4} = 1/2 \,.$$ 
Hence, we obtain $$\Abs{ \E B^t - \E A^t} \leq 4\epsilon\Paren{4 \cdot \frac{1}{2}+8 \sqrt{\log(1/\zeta)}}^t \leq 4\epsilon\Paren{12 \sqrt{\log(1/\zeta)}}^t \,.$$

\newpage

\phantomsection
\addcontentsline{toc}{section}{References}
\bibliographystyle{amsalpha}
\bibliography{bib/custom}

\appendix

\section{Discussion of \cref{lem:generic_SQ_lower_bound}}
\label{sec:generic_sq_lower_bound}

As we already mentioned, the proof of \cref{lem:generic_SQ_lower_bound} is verbatim the same as the one of Proposition 3.8 in \cite{DK20}.
The only difference is that we apply their Lemma 3.5 with an arbitrary $c > 0$ instead of $c = 1/2$.
Further, we need the following more precise version of their Fact 3.6.
Also here, the proof is the same (and straightforward) but for us it is important to know the explicity dependence of the size of the set on the parameter $c$. 
In \cite{DK20} it was only stated as $2^{\Omega_c(m)}$.
\begin{fact}
    \label{fact:large_set_random_unit_vectors}
    Let $c > 0$. 
    There exists a set $S$ of unit vectors over $\R^m$ of size $\exp(c^2 \cdot \Omega(m))$ such that for all $u,v \in S$ it holds that $\abs{\iprod{u,v}} \leq c$.
    Further, the constant inside the $\Omega(\cdot)$ notation is independent of $c$.  
\end{fact}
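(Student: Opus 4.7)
The plan is a standard probabilistic argument for constructing a near-orthogonal code on the sphere, with care taken to track the dependence on $c$.

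\textbf{Step 1: Concentration of inner products of random unit vectors.} First I would recall (or reprove in one line via Gaussian projection / spherical-cap volume) the following standard concentration bound: if $u, v$ are independent uniformly random unit vectors in $\R^m$, then
\begin{equation*}
    \Pr\Bigl[\,\Abs{\iprod{u,v}} \geq c\,\Bigr] \;\leq\; 2\exp\Paren{-c^2 (m-1)/2}\,.
\end{equation*}
One clean way to see this: by rotational invariance we may condition on $v = e_1$, so $\iprod{u,v} = u_1$, and $u$ is a uniform point on $S^{m-1}$, whose first coordinate satisfies the above sub-Gaussian tail (this can be derived from the fact that $u_1 \stackrel{d}{=} g_1/\norm{g}_2$ for $g \sim N(0, I_m)$, or directly from the cap-volume estimate $\vol\set{x \in S^{m-1} \colon x_1 \geq c}/\vol(S^{m-1}) \leq \exp(-c^2(m-1)/2)$).

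\textbf{Step 2: Probabilistic construction.} Set $N = \lceil \exp(c^2 (m-1)/4) \rceil$ and sample $v_1, \ldots, v_N$ i.i.d.\ uniformly from $S^{m-1}$. By Step~1 and linearity of expectation, the expected number of ``bad'' pairs $(i,j)$ with $i < j$ and $\abs{\iprod{v_i, v_j}} > c$ is at most
\begin{equation*}
    \binom{N}{2} \cdot 2 \exp\Paren{-c^2(m-1)/2} \;\leq\; N^2 \cdot \exp\Paren{-c^2(m-1)/2} \;\leq\; N/2\,,
\end{equation*}
by our choice of $N$. In particular, there exists a realization in which the number of bad pairs is at most $N/2$. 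Delete one vector from each bad pair to obtain a set $S$ of size at least $N/2 = \exp(c^2 \cdot \Omega(m))$ in which every pairwise inner product has absolute value at most $c$.

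\textbf{Step 3: Tracking that the constant is $c$-independent.} The only place $c$ enters the exponent is through the factor $c^2(m-1)/4$ coming directly from the cap-volume bound in Step~1; the universal constant $1/4$ (or any fixed constant less than $1/2$) does not depend on $c$. Thus the implicit constant in the $\Omega(m)$ notation in $\exp(c^2 \cdot \Omega(m))$ is absolute, as required.

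There is essentially no obstacle; the only subtle point is making sure Step~1 gives a bound with the correct $c^2 m$ scaling (a naïve Gaussian comparison $u = g/\norm{g}$ combined with norm concentration works, but one must handle the normalization — easiest is to condition on $\norm{g}_2 \geq \sqrt{m/2}$, which holds with overwhelming probability, and use $\Pr[\abs{g_1} \geq c\sqrt{m/2}] \leq 2\exp(-c^2 m /4)$).
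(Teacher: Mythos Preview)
Your proposal is correct and essentially mirrors the paper's argument: the paper also samples unit vectors uniformly at random, reduces by rotational invariance to bounding $\Pr[|v_1|>c]$ via the Gaussian representation $v = X/\norm{X}$ together with the event $\norm{X}\geq\sqrt{m/2}$ (exactly the normalization you describe at the end), and then invokes a union bound over pairs. The only cosmetic difference is that you finish with an expectation-plus-deletion argument rather than a straight union bound; both yield $|S|=\exp(c^2\cdot\Omega(m))$ with a $c$-independent absolute constant.
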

\begin{proof}
    Let the elements of $S$ be picked independently and uniformly at random from the unit sphere.
    Let $u,v \in S$ and w.l.o.g. assume that $u = e_1$.
    Since $v$ has the same distribution as $X/\norm{X}$ where $X \sim N(0,\Id_m)$ it holds that
    $$\Psymb [\iprod{u,v} > c] = \Psymb [\abs{v_1} > c] = \Psymb [\abs{X_1} > c \cdot \norm{X}] \leq \Psymb [\abs{X_1} > c \sqrt{m/2}] + \Psymb [\norm{X} < \sqrt{m/2}] \,.$$
    By standard Gaussian tail bounds the first probability is at most $2\exp(-c^2 m/4)$ and by standard chi-squared tail bounds (e.g., \cite{W19}) the second probability is at most $2\exp(-m/32)$.
    Hence, the claim follows by a union bound over all pairs of distinct elements in $S$.
\end{proof}
\section{Moment Bounds}
\label{sec:moment_bound}
In the following, we prove a statement that is similar to a lemma that was previously shown in \cite{DK20}.

\begin{fact}
    \label{fact:generic_moment_bound}
    Let $f \colon \R \rightarrow \R$ be a non-negative function such that
    \begin{itemize}
    \item $f(x)=0$ for $x\notin[0,1]$, and
    \item $\int_0^1 f(x) \, dx = 1$.
\end{itemize}
Let $G \sim N(0,1)$ and with a little abuse of notation, we will denote the pdf of $G$ also by $G$. For every $\delta > 0$, define $$G_\delta (x) = \sum_{n \in \Z} f\Paren{\frac{x + n \delta}{\delta}} \cdot G(x) \,.$$ We have $$\abs{\E G^t - \E G_\delta^t} \leq t! \cdot \delta^t \cdot \mathrm{exp}(-\Omega(1/\delta^2)) \,.$$
\end{fact}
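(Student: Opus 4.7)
The plan is to combine a Fourier expansion of the periodic factor in $G_\delta$ with a complex-analytic Cauchy estimate on the derivatives of the Gaussian characteristic function. First, set $P(x) := \sum_{n\in\Z} f((x+n\delta)/\delta)$, so that $G_\delta(x) = P(x)G(x)$. Because $f$ is supported in $[0,1]$, the translates $f((\cdot+n\delta)/\delta)$ are supported on essentially disjoint intervals of length $\delta$, so $P$ is $\delta$-periodic with $P(x)=f(x/\delta)$ for $x\in(0,\delta)$. Its Fourier coefficients $c_k = \int_0^1 f(u)\,e^{-2\pi i k u}\,du$ satisfy $c_0=1$ and $|c_k|\le \int_0^1 f = 1$. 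Expanding $P-1$ as a Fourier series and exchanging sum and integral (which is legitimate thanks to the Gaussian decay of $x^tG(x)$ and the absolute summability established below), I obtain
\[
\E G_\delta^t - \E G^t \;=\; \sum_{k\neq 0} c_k \int_\R x^t\, e^{2\pi i k x/\delta}\, G(x)\, dx.
\]

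The inner integral is recognizable as a derivative of the Gaussian characteristic function. Differentiating $\int e^{i\xi x}G(x)\,dx = e^{-\xi^2/2}$ a total of $t$ times in $\xi$ gives $\int x^t e^{i\xi x}G(x)\,dx = i^{-t}\phi^{(t)}(\xi)$, where $\phi(w):=e^{-w^2/2}$ is viewed as an entire function on $\C$. With $\xi_k := 2\pi k/\delta$ and $|c_k|\le 1$, everything reduces to bounding $|\phi^{(t)}(\xi_k)|$. I would do this with Cauchy's integral formula on the circle of radius $R = |\xi_k|/2$ centered at $\xi_k$: a short calculation of $\mathrm{Re}(w^2)$ for $w = \xi_k + Re^{i\theta}$ shows that the minimum over $\theta$ equals $(\xi_k-R)^2 = \xi_k^2/4$, so $|\phi(w)|\le e^{-\xi_k^2/8}$ on that circle. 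Cauchy then yields
\[
\bigabs{\phi^{(t)}(\xi_k)} \;\le\; \frac{t!}{R^t}\,e^{-\xi_k^2/8} \;=\; t!\Paren{\frac{\delta}{\pi|k|}}^t e^{-\pi^2 k^2/(2\delta^2)}.
\]
Summing over $k\neq 0$, the $k=\pm 1$ terms dominate since the exponent decays super-geometrically in $|k|$, giving $|\E G^t - \E G_\delta^t| \le O(t!\,\delta^t)\cdot e^{-\pi^2/(2\delta^2)} = t!\,\delta^t\,\exp(-\Omega(1/\delta^2))$, as claimed.

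The main obstacle is the Cauchy-estimate step, where the polynomial factor $R^{-t}$ favors large $R$ while the exponential growth of $|e^{-w^2/2}|$ off the real axis favors small $R$. The choice $R=|\xi_k|/2$ is the balance point: the maximum of $|\phi|$ on the circle is still Gaussianly small ($e^{-\xi_k^2/8}$), and $R^{-t}$ produces exactly the $\delta^t/|k|^t$ prefactor the statement requires. A different order of magnitude for $R$ would either give away the $\delta^t$ factor or wash out most of the exponential decay.
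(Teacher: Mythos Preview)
Your proof is correct and follows essentially the same approach as the paper: expand the $\delta$-periodic factor in a Fourier series with $c_0=1$ and $|c_k|\le 1$, reduce the moment difference to a sum of derivatives of the Gaussian characteristic function at the points $2\pi k/\delta$, and bound those derivatives via Cauchy's integral formula. The only cosmetic difference is the radius in the Cauchy estimate---you take $R=|\xi_k|/2$ while the paper uses the fixed radius $\pi/(2\delta)$ for every $k$---but both choices produce the $\delta^t$ prefactor and the $\exp(-\Omega(k^2/\delta^2))$ decay needed to sum over $k\neq 0$.
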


The proof will make use of Fourier analysis.
We will introduce here the necessary background.
For a function $g \colon \R \rightarrow \R$ we define its Fourier transform to be $$\hat{g}(\omega) = \frac{1}{\sqrt{2\pi}} \int_{-\infty}^\infty g(x) \cdot e^{-i  \omega x} \, dx \,.$$
It is well-known that for $a, b \in \R$ and $h \colon \R \rightarrow \R$ we have
\begin{align*}
   \widehat{(a \cdot g + b \cdot h)} = a \cdot \hat{g} + b \cdot \hat{h} \quad\text{and} \quad \widehat{(g \cdot h)} = \frac{1}{\sqrt{2\pi}} \Paren{\hat{g} * \hat{h}}\,,
\end{align*}
where $*$ denotes convolution.
Further, if $G$ denotes the pdf of a standard Gaussian then $\hat{G} =G$.

For a random variable $X$ with pdf $g$, let $$\varphi_X(t) = \int_{-\infty}^\infty g(x) \cdot e^{itx} \, dx$$ denote its characteristic function.
Notice that $\varphi_X(t) = \sqrt{2 \pi} \cdot \hat{g}(-t)$.
For the $t$-th moment of $X$ it follows
\begin{align*}
\E X^t = \frac{1}{i^t} \cdot \varphi_X^{(t)}(0) &= \sqrt{2\pi} \cdot (-i)^t \cdot (-1)^t\cdot \hat{g}^{(t)}(0) \\
&= \sqrt{2\pi} \cdot i^t \cdot \hat{g}^{(t)}(0) \,.
\end{align*}

\begin{proof}[Proof of \cref{fact:generic_moment_bound}]
    By the above discussion it is enough to show that $$\abs{\hat{G}_\delta^{(t)} (0) - \hat{G}^{(t)}(0)} \leq t! \cdot \frac{\delta^t}{\sqrt{2\pi}} \cdot \exp^{-\Omega((1/\delta)^2)} \,.$$
    We know that $\hat{G} =  G$.
    Next, we will compute $\hat{G}_\delta (\omega)$.
    Let
    $$F(x) = \sum_{n \in \Z} f\left(\frac{x + n \delta}{\delta}\right)\,,$$
    so that $G_\delta = G(x) \cdot F(x)$ and hence
    $$\hat{G_\delta} = \frac{1}{\sqrt{2\pi}}\hat{G}(\omega) * \hat{F}(\omega)\,.$$
    For $F$ we obtain the following:
    Since $F$ is periodic with period $\delta$, we can decompose it using the Fourier basis over $[0,\delta]$.
    We get that
    \begin{align*}
    F(x) = \sum_{n \in \Z} \hat{F}_n \cdot e^{2 i \pi \frac{n}{\delta}x}\,,
    \end{align*}
    where
    \begin{align*}
    \hat{F}_n &= \frac{1}{\delta} \int_0^\delta F(x) \cdot e^{ - 2 i \pi \frac{n}{\delta}x} \, dx = \frac{1}{\delta} \int_0^\delta \sum_{l \in \Z} f\Paren{\frac{x + l \delta}{\delta}} \cdot e^{ - 2 i \pi \frac{n}{\delta}x} \, dx = \frac{1}{\delta} \int_0^\delta f\Paren{\frac{x}{\delta}} \cdot e^{ - 2 i \pi \frac{n}{\delta}x} \, dx \\
    &= \int_0^1 f(y) \cdot e^{ - 2 i \pi n y} \, dy\,.
    \end{align*}
    
    For $n=0$, we clearly have $\hat{F}_0 = 1$ since $f$ integrates to 1 over $[0,1]$.
    Now for $n \neq 0$, we can write $$\abs{\hat{F}_n} = \Abs{\int_0^1 f(y) \cdot e^{ - 2 i \pi n y} \, dy} \leq \int_0^1 \abs{f(y) \cdot e^{ - 2 i \pi n y}} \, dy = \int_0^1 f(y) \, dy = 1\,, $$
    where we used the fact that $f$ is non-negative and that the complex exponential has magnitude 1.
    
    Now using the fact that the Fourier transform of $e^{2 i \pi \frac{n}{\delta}x}$ is equal to $\sqrt{2\pi}\bm \delta_{\mathrm{D}} \Paren{\omega - \frac{2\pi n}{\delta}}$, where $\bm \delta_{\mathrm{D}}$ is the Dirac delta-distribution\footnote{Note that we use bold font and the subscript $\mathrm{D}$, i.e., $\bm \delta_{\mathrm{D}}$, to denote the Dirac delta-distribution and non-bold font for the paramter $\delta \in \R$.}, we get that
    $$\hat{F}(\omega) = \sqrt{2\pi}\cdot \sum_{n \in \Z} \hat{F}_n \cdot \bm \delta_{\mathrm{D}} \Paren{\omega - \frac{2\pi n}{\delta}}\,, $$
     and hence $$\hat{G}_\delta(\omega) = \frac{1}{\sqrt{2\pi}}\hat{G} (\omega) * \hat{F}(\omega) = \sum_{n \in \Z} \hat{F}_n \cdot \hat{G} (\omega) * \bm \delta_{\mathrm{D}} \Paren{\omega - \frac{2\pi n}{\delta}}=\sum_{n\in Z} \hat{F}_n \cdot \hat{G}\Paren{\omega - \frac{2\pi n}{\delta}} \,.$$
Now since $\hat{F}_0=1$ and $|\hat{F}_n|\leq 1$ for $n\neq 1$, we get $$\hat{G}_\delta^{(t)} (0) = \hat{G}^{(t)}(0) + \sum_{n \neq 0} \hat{F}_n \cdot \hat{G}^{(t)}\Paren{ - \frac{2\pi n}{\delta}}\,,$$
    and 
\begin{align*}
\abs{\hat{G}_\delta^{(t)} (0) - \hat{G}^{(t)}(0)} &\leq \sum_{n \neq 0}~\left|\hat{G}^{(t)} \Paren{ \frac{2\pi n}{\delta}}\right| \,.
\end{align*}

Now using Cauchy's integral formula, we have
$$\hat{G}^{(t)} \Paren{ \frac{2\pi n}{\delta}} = \frac{t!}{2\pi i}\oint_{\gamma_n}\frac{\hat{G}(z)}{\Paren{z-\frac{2\pi n}{\delta}}^{t+1}}dz\,,$$
where the complex (contour) integral is over the circle $\gamma_n$ of center $\frac{2\pi n}{\delta}$ and of radius $\frac{\pi }{2\delta}$ in the complex plane. Now since the circle $\gamma_n$ has length $\frac{\pi^2}{\delta}$, we get
\begin{align*}
\left|\hat{G}^{(t)} \Paren{ \frac{2\pi n}{\delta}}\right| &= \frac{t!}{2\pi }\left|\oint_{\gamma_n}\frac{\hat{G}(z)}{\Paren{z-\frac{2\pi n}{\delta}}^{t+1}}dz\right| \leq \frac{t!}{2\pi }\cdot \frac{\pi^2}{\delta}\cdot\max_{z\in\gamma} ~\left|\frac{\hat{G}(z)}{\Paren{z-\frac{2\pi n}{\delta}}^{t+1}}\right|\\
&= t!\cdot\frac{\pi}{2\delta}\cdot\frac{\max_{z\in\gamma}|\hat{G}(z)|}{\Paren{\frac{\pi}{2\delta}}^{t+1}} \leq t! \cdot \frac{\delta^t}{\sqrt{2\pi}}\cdot e^{-\Omega((n/\delta)^2)}\,,
\end{align*}
where in the last inequality we used the fact that
\begin{align*}
\max_{z\in\gamma_n}|\hat{G}(z)| &= \max_{z\in\gamma_n}|G(z)| = \frac{1}{\sqrt{2\pi}}\max_{z\in\gamma_n}~\left|e^{-\frac{z^2}2}\right| = \frac{1}{\sqrt{2\pi}}\max_{x+iy\in\gamma_n}~e^{-\frac{x^2-y^2}2}\\
&\leq  \frac{1}{\sqrt{2\pi}}e^{-\frac{1}{2}\Paren{\Paren{\frac{2\pi |n|}{\delta}-\frac{\pi}{2\delta}}^2-\Paren{\frac{\pi}{2\delta}}^2}}\leq \frac{1}{\sqrt{2\pi}} e^{-\frac{1}{2}\Paren{\frac{2\pi |n|}{\delta}-\frac{\pi}{\delta}}^2}\leq \frac{1}{\sqrt{2\pi}}e^{-\frac{1}{2}\Paren{\frac{\pi |n|}{\delta}}^2}=\frac{1}{\sqrt{2\pi}} e^{-\Omega((n/\delta)^2)}\,.
\end{align*}

We conclude that

\begin{align*}
\abs{\hat{G}_\delta^{(t)} (0) - \hat{G}^{(t)}(0)} &\leq t! \cdot \frac{\delta^t}{\sqrt{2\pi}}\cdot \sum_{n \neq 0}  e^{-\Omega((n/\delta)^2)}= t! \cdot \frac{\delta^t}{\sqrt{2\pi}}\cdot e^{-\Omega((1/\delta)^2)}\,.
\end{align*}
    \end{proof}

\end{document}